\renewcommand{\pdl}{\ensuremath{\PL(\dep{,})}\xspace} 
\newif\ifhideproofs
\newtheorem{theorem}{Theorem}
\newtheorem{proposition}[theorem]{Proposition}
\newtheorem{lemma}[theorem]{Lemma}
\newtheorem{corollary}[theorem]{Corollary}
\newtheorem{fact}[theorem]{Fact}
\theoremstyle{definition}
\newtheorem{definition}[theorem]{Definition}
\newtheorem{example}[theorem]{Example}
\newcommand{\minOf}[2]{\ensuremath{\min(#1,#2)}}
\renewcommand{\tuple}[1]{\ensuremath{\langle{#1}\rangle}}
\newcommand{\statesOf}[1]{\ensuremath{S(#1)}}
\newcommand{\modelsOf}[1]{\ensuremath{\llbracket #1\rrbracket}}
\newcommand{\downset}[1]{\ensuremath{\mathrm{down}(#1)}}
\newcommand{\Int}{\ensuremath{X}}   %
\newcommand{\pmW}{\ensuremath{\mathbb{W}}}
\newcommand{\pmS}{\ensuremath{\mathcal{S}}}
\newcommand{\pmL}{\ensuremath{\ell}}
\newcommand{\pmP}{\ensuremath{\prec}}
\newcommand{\pmWpeng}{\ensuremath{\pmW_{\!\text{peng}}}}
\newcommand{\pmSpeng}{\ensuremath{\pmS_{\text{peng}}}}
\newcommand{\pmLpeng}{\ensuremath{\pmL_{\text{peng}}}}
\newcommand{\pmPpeng}{\ensuremath{\pmP_{\text{peng}}}}
\newcommand{\pmWsub}{\ensuremath{\pmW_{\!\text{sub}}}}
\newcommand{\pmSsub}{\ensuremath{\pmS_{\text{sub}}}}
\newcommand{\pmLsub}{\ensuremath{\pmL_{\text{sub}}}}
\newcommand{\pmPsub}{\ensuremath{\pmP_{\text{sub}}}}
\newcommand{\pmWsup}{\ensuremath{\pmW_{\!\text{sup}}}}
\newcommand{\pmSsup}{\ensuremath{\pmS_{\text{sup}}}}
\newcommand{\pmLsup}{\ensuremath{\pmL_{\text{sup}}}}
\newcommand{\pmPsup}{\ensuremath{\pmP_{\text{sup}}}}
\newcommand{\pmWpq}{\ensuremath{\pmW_{\!\text{pq}}}}
\newcommand{\pmSpq}{\ensuremath{\pmS_{\text{pq}}}}
\newcommand{\pmLpq}{\ensuremath{\pmL_{\text{pq}}}}
\newcommand{\pmPpq}{\ensuremath{\pmP_{\text{pq}}}}
\newcommand{\nmableitW}{\nmableit_{\!\pmW}}
\newcommand{\nmableitWparam}[1]{\nmableit_{\!#1}}
\newcommand{\notnmableitW}{\notnmableit_{\!\pmW}}
\newcommand{\notnmableitWparam}[1]{\notnmableit_{\!#1}}
\newcommand*{\centernot}{%
	\mathpalette\@centernot
}
\def\@centernot#1#2{%
	\mathrel{%
		\rlap{%
			\settowidth\dimen@{$\m@th#1{#2}$}%
			\kern.5\dimen@
			\settowidth\dimen@{$\m@th#1=$}%
			\kern-.5\dimen@
			$\m@th#1\not$%
		}%
		{#2}%
	}%
}
\DeclareRobustCommand\nmableitSymb{\mathrel|\mkern-.5mu\joinrel\sim} %
\newcommand{\nmableit}{\ensuremath{\mbox{$\,\nmableitSymb\,$}}} %
\newcommand{\notnmableit}{\ensuremath{\mbox{$\,\centernot\nmableitSymb\,$}}} %
\title{A Primer for Preferential Non-Monotonic Propositional Team Logics}
\author{%
    Kai Sauerwald$^1$ \and Juha Kontinen$^2$
    \affiliations
    $^1$Artifical Intelligence Group, FernUniversität in Hagen, Hagen, Germany\\
    $^2$Department of Mathematics and Statistics, University of Helsinki, Helsinki, Finland
}
\begin{document}

\maketitle
\pagestyle{plain}

\begin{abstract}
This paper considers KLM-style preferential non-monotonic reasoning in the setting of propositional team semantics.
We show that team-based propositional logics naturally give rise to cumulative non-monotonic entailment relations. Motivated by the non-classical interpretation of disjunction in team semantics,  we give a precise characterization for preferential models for propositional dependence logic satisfying all of System P postulates.  Furthermore, we show how classical entailment and dependence logic entailment can be expressed in terms of non-trivial preferential models.
\end{abstract}

\section{Introduction}
We define non-monotonic versions of team-based logics and study their axiomatics regarding System~P. The logics are defined with the aid of preferential models in the style of \citeauthor{KS_KrausLehmannMagidor1990} (KLM, \citeyear{KS_KrausLehmannMagidor1990}).

Team semantics is a logical framework for studying concepts and phenomena that arise in the presence of plurality of data. Prime examples of such concepts are, e.g.,   functional dependence ubiquitous in database theory and conditional independence of random variables in statistics. %
 The beginning of the field of team semantics can be traced back to the introduction of (first-order) dependence logic in \cite{vaananen07}.
In dependence logic, formulas are interpreted by sets of assignments (teams). Syntactically, dependence logic extends first-order logic by dependence atoms $\dep{\vec{x},y}$ expressing that the values of the variables $\vec x$ functionally determine the value of the variable $y$. %
Inclusion logic \cite{galliani12} is another prominent logic in this context that extends first-order logic by inclusion atoms $\vec{x} \subseteq \vec{y}$, whose interpretation corresponds exactly to that of inclusion dependencies in database theory. 
During the past decade, the expressivity and complexity aspects of logics in team semantics have been extensively studied.
Fascinating connections have been drawn to areas such as database theory \cite{HannulaKV20,HannulaK16}, verification \cite{GutsfeldMOV22}, real-valued computation  \cite{abs-2003-00644}, inquisitive logic \cite{10.1215/00294527-2019-0033}, and epistemic logic \cite{Galliani15}. These works have focused on logics in the first-order, propositional and modal team semantics, and more recently also in the multiset \cite{DurandHKMV18}, probabilistic \cite{HKMV18} and semiring settings \cite{BarlagHKPV23}. As far as the authors know, a merger of logics in team semantics and non-monotonic reasoning has not been studied so far.

Non-monotonicity is one of the core phenomenons of reasoning that are deeply studied in knowledge representation and reasoning; see Gabbay et al. (\citeyear{KS_GabbayHoggerRobinson1993}) and Brewka et al. (\citeyear{KS_Brewka1997}) for an overview, with, e.g., connections to belief change \cite{KS_MakinsonGaerdenfors1991} and human-like reasoning \cite{KS_RagniKernIsbernerBeierleSauerwald2020}.
Non-monotonic inference \( \varphi \nmableit \psi \) 
is often understood
as \enquote{when \( \varphi \) holds, then usually \( \psi \) holds}, where usually can be understood in the sense of \emph{expected} \cite{KS_GaerdenforsMakinson1994}. 
One can imagine adapting this notion of non-monotonic inference to propositional team logics.
For instance, in dependence logic, an entailment \( \dep{b,f} \models \neg p \) states that 
    \emph{\enquote{when whether it is a \emph{bird} (\( b \)) determines whether it \emph{flies} (\( f \)), then it is not a penguin (\( \neg p \))}}
and an analogue non-monotonic entailment \( \dep{b,f} \nmableit \neg p \) can be read as 
    \emph{\enquote{when whether it is a \emph{bird} (\( b \)) determines whether it \emph{flies} (\( f \)), then it is \textbf{usually} not a penguin (\( \neg p \))}}.
For the latter kind of expression, there is no obvious way to formulate it in existing team-based logic, so injecting non-monotonicity is a valuable extension of team logics.
For a start, one can rely on the basic systems of non-monotonic reasoning.
The very most basic denominator of non-monotonic reasoning is often denoted cumulative reasoning, which is given axiomatically by \emph{System~C} \cite{KS_Gabbay1984}.
In extension to cumulative reasoning, non-monotonic reasoning in the style of KLM
 is considered as the \enquote{conservative core of non-monotonic reasoning} \cite{KS_Pearl1989,KS_Gabbay1984}.
KLM-style non-monotonic reasoning has two prominent representations (KLM \citeyear{KS_KrausLehmannMagidor1990}): 
\begin{enumerate}[(KLM.1)]
	\item reasoning over \emph{preferential models}; and 
	\item an axiomatic characterization, called \emph{System P}, which is an extension of System C. 
\end{enumerate}
Because of (KLM.1), KLM-style reasoning is also denoted \emph{preferential reasoning}.
Common for both representations of KLM-style reasoning is, that they are parametric in the sense that they make use of some underlying classical logic \( \mathscr{L} \), e.g., propositional logic or first-order logic. 
In this paper, we define preferential team logics via preferential models (as in KLM.1).
The rationale is that we think that preferential models capture the original intention of preferential logic best, and, as we demonstrate, it shows standard non-monotonic behaviour.
Furthermore, we study the relationship of preferential teams logic to System~P (as in KLM.2).
Our axiomatic studies show that for general team-based logics, (KLM.1) and (KLM.2) do not induce the same non-monotonic inference relations. 
This is of interest, e.g., because it gives a negative answer to the question of whether the relationship between (KLM.1) and (KLM.2) by KLM (\citeyear{KS_KrausLehmannMagidor1990}) generalize beyond the assumptions by KLM\footnote{KLM assume a compact Tarskian logic with Boolean connectives. In team logics  (by default), there is no negation, and disjunction is non-classical, i.e., it does not behave like Boolean disjunction.}.
We give a condition for preferential models that is sufficient to reestablish satisfaction of System~P in all preferential team logics. 
Specifically for preferential dependence logic, we also show that this condition exactly characterizes those preferential models such that System~P is satisfied.
Moreover, when using specific (non-trivial) preferences, preferential dependence logic becomes dependence logic, respectively, it is equivalent to classical propositional entailment.

We have proof for all statements in this paper; they are available in the accompanying supplementary material.

\section{Propositional Team-Based Logics}\label{sec:teambasedlogics}
\noindent\textbf{Propositional Logic with Team Semantics.}
We denote by $\Prop=\{p_i\ : \ i\in \mathbb{N}\}$ the set of propositional variables. 
We will use letters $p,q,r,\dots$ (with or without subscripts) to stand for elements of $\Prop$.
\begin{definition}[Classical propositional logic (\PL)]
Well formed \PL-formulas $\alpha$  are formed by the grammar:
\[\alpha::=p\mid \neg p \mid \bot\mid\top\mid \alpha\wedge\alpha\mid\alpha\vee\alpha\]
\end{definition}

In team semantics, one usually considers a non-empty finite subset $N\subseteq \Prop$ of propositional variables and defined for valuations $v:N\to\{0,1\}$ over \( N \) and \PL-formulas \( \alpha \):
\[\llbracket \alpha\rrbracket^c:=\{v:N\to\{0,1\}\mid v\models\alpha\}.\]
We write $v\models p$ in case $v(p)=1$, and $v\not\models p$ otherwise. The valuation function $v$ is extended to the set of all \PL-formulas in the usual way.
\begin{definition}
%
For any set $\Delta\cup\{\alpha\}$ of  \PL-formulas, we write $\Delta\models^c\alpha$ if for any valuation $v$, $v\models\delta$ for all $\delta\in\Delta$ implies $v\models\alpha$. We write simply $\alpha\models^c\beta$ for $\{\alpha\}\models^c\beta$ and $\alpha\equiv^c\beta$ if both $\alpha\models^c\beta$ and $\beta\models^c\alpha$.
\end{definition}

%

%
%
%

Next we define \emph{team semantics} for \PL-formulas (cf. \cite{HannulaKVV15,YangV16}). A  team $X$ is a set of valuations for some finite set $N\subseteq \Prop$. We write $\dom(X)$ for the domain $N$ of $X$.%

\begin{definition}[Team semantics of \PL]
Let $X$ be a team. For any \PL-formula $\alpha$ with $\dom(X)\supseteq \Prop(\alpha)$, the satisfaction relation $X\models\alpha$ is defined inductively as:
\begin{itemize}
\item $X\models p$ ~~iff~~ for all $v\in X$, $v\models p$;
\item $X\models\neg p$ ~~iff~~ for all $v\in X$, $v\not\models p$;
\item $X\models \bot$ ~~iff~~ $X=\emptyset$;
\item $X\models \top$ ~~is always the case;
\item $X\models\alpha\wedge\beta$ ~~iff~~ $X\models\alpha$ and $X\models\beta$;
\item $X\models\alpha\vee\beta$ ~~iff~~ there exist $Y,Z\subseteq X$ such that $X=Y\cup Z$, $Y\models\alpha$ and $Z\models\beta$.
\end{itemize}
The set of all teams \( X \) with \( X\models\alpha \) is written as \( \modelsOf{\alpha} \).
Logical entailment and equivalence are defined as usual.%
\end{definition}

\begin{proposition}
Let $\alpha$ be a \PL-formula. Then the following properties hold:
\begin{description}
\item[Flatness:]   $X\models \alpha \iff \text{for all } v\in X,~\{v\}\models\alpha$. 
\item[Empty team property:] $\emptyset \models \alpha$.
\item[Downwards closure:] If $X\models \alpha$ and $Y\subseteq X$, then  $Y\models \alpha$.
\item[Union closure:] If  $X\models \alpha$ and $Y\models\alpha$, then $X\cup Y\models \alpha$.
\end{description}
\end{proposition}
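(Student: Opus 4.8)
The plan is to prove the four properties by a single simultaneous induction on the structure of the \PL-formula $\alpha$, since all four are preserved under the connectives and the base cases are immediate. For the atomic cases $p$ and $\neg p$, flatness holds by definition (the clause $X\models p$ \emph{is} a universal quantification over singletons), and the empty team property, downwards closure and union closure then follow trivially from flatness: a property that is equivalent to ``all singletons $\{v\}$ with $v\in X$ satisfy $\alpha$'' is automatically inherited by subsets, by unions, and by the empty set. The cases $\bot$ and $\top$ are checked directly ($\emptyset$ is the unique team satisfying $\bot$; every team satisfies $\top$).

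For the inductive step I would treat the four properties in a useful order. I would first establish \textbf{flatness} for $\alpha\wedge\beta$ and $\alpha\vee\beta$ using the induction hypothesis, because once flatness is known for $\alpha$, the other three properties drop out for free exactly as in the atomic case. For conjunction: $X\models\alpha\wedge\beta$ iff $X\models\alpha$ and $X\models\beta$ iff (by IH) every $\{v\}$ ($v\in X$) satisfies $\alpha$ and every such $\{v\}$ satisfies $\beta$, iff every $\{v\}$ satisfies $\alpha\wedge\beta$. For disjunction: the $\Leftarrow$ direction of flatness is the easy split $X=Y\cup Z$; for $\Rightarrow$, given a witnessing decomposition $X=Y\cup Z$ with $Y\models\alpha$, $Z\models\beta$, each $v\in X$ lies in $Y$ or in $Z$, so by IH $\{v\}\models\alpha$ or $\{v\}\models\beta$, and then $\{v\}=\{v\}\cup\{v\}$ (or $\{v\}=\{v\}\cup\emptyset$, using the empty team property from the IH) witnesses $\{v\}\models\alpha\vee\beta$; conversely if every singleton satisfies $\alpha\vee\beta$, collect $Y=\{v\in X: \{v\}\models\alpha\}$ and $Z=\{v\in X:\{v\}\models\beta\}$, note $X=Y\cup Z$, and use the IH union-closure (or rather flatness) for $\alpha$ and $\beta$ to conclude $Y\models\alpha$ and $Z\models\beta$.

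With flatness in hand for all $\alpha$, the remaining three properties are uniform corollaries: $\emptyset\models\alpha$ since the universal quantifier over $v\in\emptyset$ is vacuous; if $X\models\alpha$ and $Y\subseteq X$ then every $v\in Y$ is in $X$ so $\{v\}\models\alpha$, giving $Y\models\alpha$; and if $X\models\alpha$ and $Y\models\alpha$ then every $v\in X\cup Y$ satisfies $\{v\}\models\alpha$, giving $X\cup Y\models\alpha$. Alternatively one can prove downwards closure and union closure directly by induction (for $\vee$, reuse the same decomposition $X=Y\cup Z$ intersected with the subteam, or take componentwise unions of the two decompositions), but routing everything through flatness is the cleanest.

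The only mildly delicate point — the ``main obstacle'' — is the disjunction case of flatness, specifically making sure the singleton decompositions are legitimate: one must invoke the empty team property (already available from the IH) to allow a trivial split like $\{v\}=\{v\}\cup\emptyset$ when $v$ happens to lie in only one of the two parts, and, in the converse direction, one must verify $Y\models\alpha$ and $Z\models\beta$ from the fact that each of their singletons satisfies the respective disjunct, which is precisely the IH flatness for $\alpha$ and $\beta$. Everything else is routine bookkeeping.
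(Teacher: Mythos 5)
Your proof is correct, and it follows the canonical route: the paper itself states this proposition without proof (it is a standard fact of propositional team semantics, cf.\ Yang--V\"a\"an\"anen), and the expected argument is exactly your simultaneous induction establishing flatness first and then reading off the empty team property, downwards closure, and union closure as immediate consequences of the quantification over singletons. The delicate points you flag --- using the empty team property in the split $\{v\}=\{v\}\cup\emptyset$ for the forward direction of flatness at $\vee$, and observing that a singleton decomposition forces one part to be the whole singleton in the converse direction --- are handled correctly, so there is nothing to add.
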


For any \PL-formula $\alpha$, it  further holds that 
\[\{v\}\models\alpha\iff v\models\alpha,\]
and hence for classical formulas, $\Delta\models^c\alpha \iff \Delta\models\alpha$.

\smallskip\noindent\textbf{Propositional\,Dependence\,and\,Inclusion\,Logic.}
A {\em (propositional) dependence atom} is a string $\dep{a_1\dots a_k,b}$, and a {\em (propositional) inclusion atom} is a string $a_1\dots a_k\subseteq b_1\dots b_k$, in which $a_1,\dots,a_k,b,b_1,\dots,b_k$ are propositional variables from \Prop.
The team semantics of these two types of atoms is defined  as follows:
\begin{itemize}
\item  $X \models \dep{\vec{a}, b}$ ~~iff~~ for all $v, v' \in X$, $v(\vec{a})=v'(\vec{a})$ implies $v(b)=v'(b)$.
\item $X\models\vec{a}\subseteq\vec{b}$ ~~iff~~ for all $v\in X$, there exists $v'\in X$ such that $v(\vec{a})=v'(\vec{b})$.
\end{itemize}

We define {\em propositional dependence logic} (denoted as \pdl) as the extension of \PL-formulas  with dependence atoms. %
Similarly, {\em propositional inclusion logic} (denoted as \pincl) is the extension of \PL  by  inclusion atoms. In this paper,  we use \emph{propositional team logic} to refer to any of the logics \PL, {\pdl} and {\pincl}. 
\begin{proposition}\label{prop:pdl_pincl_properties}
Formulas of {\pdl} and {\pincl} have the 
empty team property. Moreover, \pdl-formulas have the downwards closure property, while \pincl-formulas have the union closure property.
\end{proposition}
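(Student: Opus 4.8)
The plan is a straightforward structural induction on the syntax of \pdl- and \pincl-formulas. For formulas built solely from the \PL-grammar, all three properties (empty team property, downwards closure, union closure) were already recorded in the preceding proposition on \PL, so these cases require nothing new; what remains is to verify the properties for the dependence atom $\dep{\vec a,b}$ and the inclusion atom $\vec a\subseteq\vec b$, and to check that the properties are inherited through $\wedge$ and $\vee$.

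For the atoms I would argue directly from the semantic clauses. The empty team property holds for both $\dep{\vec a,b}$ and $\vec a\subseteq\vec b$ vacuously, since their clauses only quantify over members of the team: there is no $v\in\emptyset$ and no $v,v'\in\emptyset$ available to falsify them. For downwards closure of a dependence atom, if $X\models\dep{\vec a,b}$ and $Y\subseteq X$, then any $v,v'\in Y$ also belong to $X$, so $v(\vec a)=v'(\vec a)$ still forces $v(b)=v'(b)$, i.e. $Y\models\dep{\vec a,b}$. For union closure of an inclusion atom, if $X\models\vec a\subseteq\vec b$ and $Y\models\vec a\subseteq\vec b$ and $v\in X\cup Y$, then a matching witness $v'$ already exists inside whichever of $X$ or $Y$ contains $v$, and that $v'$ also lies in $X\cup Y$; hence $X\cup Y\models\vec a\subseteq\vec b$.

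For the connectives, the $\wedge$-case is immediate: since $X\models\alpha\wedge\beta$ iff $X\models\alpha$ and $X\models\beta$, each property transfers from the two conjuncts to the conjunction by applying the induction hypothesis to each conjunct on the relevant team (and $\emptyset\models\alpha$ together with $\emptyset\models\beta$ gives $\emptyset\models\alpha\wedge\beta$). The $\vee$-case is the only place that needs a little thought, because one has to produce the right split of the new team. For the empty team property, write $\emptyset=\emptyset\cup\emptyset$ and use the induction hypothesis on each disjunct. For downwards closure, given a witnessing split $X=Y\cup Z$ with $Y\models\alpha$, $Z\models\beta$ and a subteam $W\subseteq X$, take $W=(W\cap Y)\cup(W\cap Z)$; since $W\cap Y\subseteq Y$ and $W\cap Z\subseteq Z$, the induction hypothesis yields $W\cap Y\models\alpha$ and $W\cap Z\models\beta$, so $W\models\alpha\vee\beta$. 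For union closure, given witnessing splits $X_1=Y_1\cup Z_1$ and $X_2=Y_2\cup Z_2$, use $X_1\cup X_2=(Y_1\cup Y_2)\cup(Z_1\cup Z_2)$ and apply the induction hypothesis (union closure) to get $Y_1\cup Y_2\models\alpha$ and $Z_1\cup Z_2\models\beta$, whence $X_1\cup X_2\models\alpha\vee\beta$.

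No step presents a genuine obstacle; if anything, the only subtlety worth flagging is the disjunction case, where the decompositions must be chosen carefully (intersecting the old witnesses with $W$ for downwards closure, and taking componentwise unions of the witnesses for union closure) so that the induction hypothesis actually applies.
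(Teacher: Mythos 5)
Your proof is correct, and the induction you give (vacuous truth of the atoms on the empty team, intersecting the witnessing split with the subteam for downwards closure, and taking componentwise unions of the witnessing splits for union closure) is exactly the standard argument for this proposition. The paper itself states the result without proof, treating it as a known fact from the team-semantics literature, so there is nothing to compare against beyond noting that your argument is the expected one and is complete.
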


A dependence atom with the empty sequence in the first component will be abbreviated as $\dep{p}$ and called {\em constancy atoms}. The team semantics of constancy atoms is reduced to

\begin{itemize}
\item  $X \models \dep{p}$ ~~iff~~ for all $v, v' \in X$, $v(p)=v'(p)$.
\end{itemize}

\begin{example}\label{example_dep_atm_propositional}
Consider the team $X$ over $\{p,q,r\}$ defined by:
\begin{center}
\begin{tabular}{c|ccc}
&$p$&$q$&$r$\\\hline
$v_1$&$1$&$0$&$0$\\
$v_2$&$0$&$1$&$0$\\
$v_3$&$0$&$1$&$0$
\end{tabular}
\end{center}
We have $X\models\dep{p,q}$ and $X\models\dep{r}$. Moreover,  $X\models\dep{p}\vee \dep{p}$ but  $X\not \models\dep{p}$. It is worth noting that \pincl-formulas $\phi$ satisfy $$\phi \equiv \phi\vee \phi$$
because of the union closure property.
\end{example}
We can define the flattening $\phi^f$ of a  \pdl-formula by replacing all dependence atoms by $\top$. It is easy to check that $\phi \models \phi^f$
and that 
\begin{equation}\label{flattening}
\{s\}\models \phi \Leftrightarrow s\models \phi^f
\end{equation}
for all assignments $s$ using the fact that dependence atoms are always satisfied by singletons.
 
\section{Preferential Logics}
\smallskip\noindent
\textbf{Preferential Models and Entailment.}
In preferential logic, an entailment  \( \varphi \nmableit \psi \) holds, when minimal models of \( \varphi \) are models of \( \psi \). This is formalized via preferential models, which we introduce in the following.

	For a  strict partial order \( {\pmP} \subseteq \mathcal{S} \times \mathcal{S} \) on a set \( \mathcal{S} \) and a subset \( S \subseteq \mathcal{S} \), an element \( s \in S \) is called \emph{minimal in \( S \) with respect to \( \pmP \)} if for each \( s' \in S \) holds  \( s' \not \pmP s \). 
	Then, \( \minOf{S}{{\pmP}} \)  is the set of all \( s \in S \)  that are minimal in \( S \) with respect to \( \pmP \).
\begin{definition}
    Let \( \mathscr{L} \) be a logic and \( \Omega \) be the set of interpretations for \( \mathscr{L} \) .
    A \emph{preferential model} for \(  \mathscr{L}  \) is a triple \( \pmW=\tuple{\pmS,\pmL,\pmP} \) where \( \pmS \) is a set, \( \ell: \pmS \to \Omega \), \( \pmP \) is a strict partial order on \( \pmS \), and the following condition is satisfied:
    \begin{itemize}
        \item[]\emph{[Smoothness]} \( \statesOf{\varphi}=\{ s \in \pmS \mid \ell(s) \models \varphi \} \) is smooth with respect to \( \pmP \) for every formula \( \varphi \in \mathscr{L} \), i.e, for each \( s \in \statesOf{\varphi} \) holds
        \begin{itemize}
        	\item \( s \) is minimal in \( \statesOf{\varphi} \) with respect to \( \pmP \) or
        	\item there exists an \( s' \in \statesOf{\varphi} \) that is minimal in \( \statesOf{\varphi} \) with respect to \( \pmP \) with \( s' \pmP s \).
        \end{itemize} 
    \end{itemize}
\end{definition}
Smoothness guarantees the existence of minimal elements.
\begin{definition}
    The entailment relation \( {\nmableitW} \subseteq \mathscr{L} \times \mathscr{L} \) for a preferential model \( \pmW \) over a logic \( \mathscr{L} \) is given by
    \begin{equation*}
    	\varphi \nmableitW \psi \text{ if for all }  s\in\minOf{\statesOf{\varphi}}{\pmP} \text{ holds } \ell(s)\models \psi
    \end{equation*}
    An entailment relation \( {\nmableit} \subseteq \mathscr{L} \times \mathscr{L} \) is called \emph{preferential} if there is a preferential model \( \pmW \) for \( \mathscr{L} \) such that  \( {\nmableit} =   {\nmableitW}  \). 
\end{definition}%
Because there are many preferential models for a logic $\mathscr{L}$, we may have for one logic $\mathscr{L}$ with multiple preferential logics that are based on $\mathscr{L}$.

\smallskip\noindent
\textbf{Axiomatic Characterization by System P.}
We make use of the following rules for non-monotonic entailment \( \nmableit \):
\begin{center}
	\begin{minipage}[b]{0.5\linewidth}
		\begin{align}
			\frac{}{\alpha\nmableit\alpha} \tag{Ref}\label{pstl:Ref}\\[0.25em]
			\frac{\alpha\equiv\beta\hspace{0.5cm}\alpha\nmableit\gamma}{\beta\nmableit\gamma} \tag{LLE}\label{pstl:LLE}\\[0.25em]
			\frac{\alpha\land\beta\nmableit\gamma\hspace{0.5cm}\alpha\nmableit\beta}{\alpha\nmableit\gamma} \tag{Cut}\label{pstl:Cut}
		\end{align}
	\end{minipage}\begin{minipage}[b]{0.45\linewidth}
		\begin{align}
			\frac{\alpha\models\beta\hspace{0.5cm}\gamma\nmableit\alpha}{\gamma\nmableit\beta} \tag{RW}\label{pstl:RW}\\[0.25em]
			\frac{\alpha\nmableit\beta\hspace{0.5cm}\alpha\nmableit\gamma}{\alpha\land\beta\nmableit\gamma} \tag{CM}\label{pstl:CM}\\[0.25em]
			\frac{\alpha\nmableit\gamma\hspace{0.5cm}\beta\nmableit\gamma}{\alpha\lor\beta\nmableit\gamma} \tag{Or}\label{pstl:Or}
		\end{align}
	\end{minipage}
\end{center}
Note that \( \models \) is the entailment relation of the underlying monotonic logic \( \mathscr{L} \).
The rules \eqref{pstl:Ref}, \eqref{pstl:RW}, \eqref{pstl:LLE},  \eqref{pstl:CM} and  \eqref{pstl:Cut} forming \emph{System C}.
The rule \eqref{pstl:CM} goes back to the foundational paper on non-monotonic reasoning system by Gabbay (\citeyear{KS_Gabbay1984}) and is a basic wakening of monotonicity. 
\emph{System P}  consists of all rules of System C and the rule \eqref{pstl:Or}.
The rule of \eqref{pstl:Or} is motivated by reasoning by case \cite{KS_Pearl1989}.
KLM
showed  a direct correspondence between preferential entailment relations and entailment relations that satisfy System~P.
\begin{proposition}[{\citeauthor{KS_KrausLehmannMagidor1990} \citeyear{KS_KrausLehmannMagidor1990}}]\label{prop:KLM_preferential_representation}
	Let \( \mathscr{L} \) be a compact Tarskian logic with all Boolean connectives.
    A entailment relation \( {\nmableit} \subseteq \mathscr{L}\times \mathscr{L} \) satisfies System P if and only if \( {\nmableit} \) is preferential.
\end{proposition}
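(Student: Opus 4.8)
\emph{Setup.} The plan is to prove the two implications separately, following KLM. For the soundness direction (preferential \( \Rightarrow \) System~P) I would first isolate four bookkeeping facts about a preferential model \( \pmW=\tuple{\pmS,\ell,\pmP} \): (i) if \( \alpha\equiv\beta \) then \( \statesOf{\alpha}=\statesOf{\beta} \); (ii) since \( \mathscr{L} \) carries all Boolean connectives, \( \statesOf{\alpha\wedge\beta}=\statesOf{\alpha}\cap\statesOf{\beta} \) and \( \statesOf{\alpha\vee\beta}=\statesOf{\alpha}\cup\statesOf{\beta} \); (iii) if \( T\subseteq U\subseteq\pmS \) then \( \minOf{U}{\pmP}\cap T\subseteq\minOf{T}{\pmP} \); and (iv) if \( \minOf{\statesOf{\alpha}}{\pmP}\subseteq\statesOf{\beta} \), then by Smoothness of \( \statesOf{\alpha} \) every minimal element of \( \statesOf{\alpha}\cap\statesOf{\beta} \) is already minimal in \( \statesOf{\alpha} \).

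\emph{Soundness.} Given (i)--(iv), the rule-by-rule check is mechanical. \eqref{pstl:Ref} and \eqref{pstl:RW} are immediate (the latter by soundness of \( \models \)); \eqref{pstl:LLE} is (i). For \eqref{pstl:Cut}: under \( \alpha\nmableit\beta \) each \( s\in\minOf{\statesOf{\alpha}}{\pmP} \) has \( \ell(s)\models\alpha\wedge\beta \), so \( s\in\statesOf{\alpha\wedge\beta}\subseteq\statesOf{\alpha} \) is minimal there by (iii), whence \( \ell(s)\models\gamma \). For \eqref{pstl:CM}: \( \minOf{\statesOf{\alpha\wedge\beta}}{\pmP}\subseteq\minOf{\statesOf{\alpha}}{\pmP}\subseteq\statesOf{\gamma} \) by (iv). For \eqref{pstl:Or}: a minimal state of \( \statesOf{\alpha}\cup\statesOf{\beta} \) lies in \( \statesOf{\alpha} \) or \( \statesOf{\beta} \) and is minimal there by (iii), so its label satisfies \( \gamma \). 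The only conceptually loaded point is that \eqref{pstl:Or} genuinely uses \( \statesOf{\alpha\vee\beta}=\statesOf{\alpha}\cup\statesOf{\beta} \), i.e.\ that disjunction in \( \mathscr{L} \) is classical.

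\emph{Completeness.} For the converse I would run the canonical-model construction. Write \( \widetilde C(\alpha)=\{\psi\mid\alpha\nmableit\psi\} \) and first prove the bridge lemma \( \alpha\nmableit\psi\iff\widetilde C(\alpha)\models\psi \): the forward direction is trivial, and the converse uses compactness of \( \mathscr{L} \) to get a finite \( \{\psi_1,\dots,\psi_n\}\subseteq\widetilde C(\alpha) \) with \( \psi_1\wedge\dots\wedge\psi_n\models\psi \), the derived conjunction rule (\( \alpha\nmableit\psi_1 \) and \( \alpha\nmableit\psi_2 \) yield \( \alpha\nmableit\psi_1\wedge\psi_2 \), obtainable from \eqref{pstl:CM} and \eqref{pstl:Cut}), and then \eqref{pstl:RW}. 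Next let \( \pmS \) consist of pairs \( \tuple{\alpha,m} \) where \( m\in\Omega \) is a model of \( \widetilde C(\alpha) \) (so \( m\models\alpha \) by \eqref{pstl:Ref}), put \( \ell(\tuple{\alpha,m})=m \), and define \( \pmP \) so that \( \tuple{\alpha,m} \) is strictly below \( \tuple{\beta,n} \) precisely when passing to the context \( \alpha \) should discard \( \tuple{\beta,n} \) --- roughly, when \( n \) violates some formula of \( \widetilde C(\alpha) \) while \( \alpha \)'s normal worlds stay compatible with the \( \beta \)-context --- and check it is a strict partial order. Then verify: \textbf{Smoothness} of each \( \statesOf{\gamma}=\{\tuple{\alpha,m}\mid m\models\gamma\} \), where compactness of \( \mathscr{L} \) produces a model of \( \widetilde C(\gamma) \) when \( \gamma\not\nmableit\bot \) and bounds chains; and \textbf{adequacy} \( {\nmableitW}={\nmableit} \). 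The inclusion \( {\nmableit}\subseteq{\nmableitW} \) holds because the label of every minimal \( \gamma \)-state models \( \widetilde C(\gamma) \); the inclusion \( {\nmableitW}\subseteq{\nmableit} \) holds because the labels of the minimal \( \gamma \)-states are exactly the models of \( \widetilde C(\gamma) \) --- this is where the precise shape of \( \pmP \) and \eqref{pstl:Or} (needed to compare \( \tuple{\alpha,m} \) for different \( \alpha \)) enter --- together with the bridge lemma.

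\emph{Main obstacle.} The hard part is the completeness direction, and within it the definition of \( \pmP \) on the pair-states: it must simultaneously be a strict partial order, be smooth for every formula, and make the labels of the minimal \( \gamma \)-states precisely the models of \( \widetilde C(\gamma) \); these three demands pull against one another, and reconciling them is the technical core of the theorem. By contrast the soundness direction is a routine check, and the compactness and ``all Boolean connectives'' hypotheses are used only in the completeness direction and in the innocuous-looking verification of \eqref{pstl:Or} --- which is exactly why working with a non-classical disjunction and without negation, as in the present paper, forces the correspondence to be revisited.
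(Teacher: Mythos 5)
The paper does not prove this proposition: it is imported verbatim from \citeauthor{KS_KrausLehmannMagidor1990} (\citeyear{KS_KrausLehmannMagidor1990}) as background, so there is no in-paper argument to compare yours against. Judged on its own terms, your soundness half (preferential \( \Rightarrow \) System~P) is complete and correct; facts (i)--(iv) are exactly the right bookkeeping, and the rule-by-rule check coincides with what the paper itself does for System~C in the proof of Proposition~\ref{prop:systemCteams} (your (iv) is literally the smoothness argument used there for \eqref{pstl:CM}), plus the \eqref{pstl:Or} case, which as you correctly stress hinges on \( \statesOf{\alpha\vee\beta}=\statesOf{\alpha}\cup\statesOf{\beta} \) and is precisely the step that breaks under team-semantic disjunction. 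The completeness half, however, is an outline rather than a proof: the bridge lemma via compactness and the derived And rule is right, but the definition of \( \pmP \) on the pair-states \( \tuple{\alpha,m} \) --- which you yourself flag as the technical core --- is left at the level of ``roughly, when \( n \) violates some formula of \( \widetilde C(\alpha) \)\dots''. In KLM's actual construction the order is pinned down in terms of normality for the disjunction \( \alpha\vee\beta \), and proving that it is a strict partial order, that every \( \statesOf{\gamma} \) is smooth, and that the minimal \( \gamma \)-states are labelled by exactly the models of \( \widetilde C(\gamma) \) occupies several nontrivial lemmas; none of that is carried out here. So: architecture faithful to the original, soundness done, completeness still a plan. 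Your closing observation about where compactness and the Boolean connectives enter is exactly the point the paper makes when explaining why this correspondence must be revisited for team logics.
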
 
\section{Preferential Team Logics}
For propositional team-based logics, we restrict ourselves to preferential models that we call standard. %
\begin{definition}
    A preferential model \( \pmW = \tuple{\pmS,\pmL,\pmP} \) is called standard if 
   \begin{enumerate}[(S1)]
       \item There is no state \( s \in \pmS \) such that \( \pmL(s)=\emptyset \)
       \item For all non-empty teams \( \Int \) there is some state \( s \in \pmS \) such that \( \pmL(s)=\Int \)
   \end{enumerate}
\end{definition}

The rationale for (S1) and (S2) is to make the models concise and meaningful, i.e., containing explicit, yet necessary information for specifying reasoning. 
By (S1) we are excluding the empty team \( \emptyset \) from \( \pmS \), because team logics considered here have the empty-team property. Hence, \( \emptyset \) is trivially a model of every formula and including it provides no extra information. 
Condition (S2) ensures that every "non-trivial" model is included, and thus, its preference status is explicitly given in the preferential model.

We define the family of preferential team logics 
as those that are induced by some standard preferential model.
\begin{definition}
    A entailment relation \( \nmableit \) over some propositional team logic is called (standard) preferential, if there is some standard preferential model \( \pmW \) such that \( {\nmableit} = {\nmableitW} \).
\end{definition}

The next example is the bird-penguin example, demonstrating that preferential team logics are indeed non-monotonic.
\begin{example}\label{ex:penguin}
Fix the set of propositional variables \( N=\{ b,p,f \} \subseteq \Prop \), with the following intended meanings: \( b \) stands for \emph{\enquote{it is a bird}}, \( p \) stands for \emph{\enquote{it is a penguin}}, and \( f \) stands for \emph{\enquote{it is able to fly}}.
We construct a (standard) preferential model, by using the following teams:
\begin{align*}
    X_{{b}\overline{p}{f}} & = \begin{tabular}{c|ccc}
        &$b$&$p$&$f$\\\hline
        $v_1$&$1$&$0$&$1$\\
    \end{tabular}
    &
    X_{{b}{p}\overline{f}} & = \begin{tabular}{c|ccc}
        &$b$&$p$&$f$\\\hline
        $v_2$&$1$&$1$&$0$\\
    \end{tabular}
\end{align*}
Let \( \pmWpeng = \tuple{\pmSpeng ,\pmLpeng ,\pmPpeng } \) be the preferential model such that \( \pmSpeng  = \{ s_{\Int} \mid \Int \text{ is a  non-empty team} \} \) and \( \pmLpeng (s_X) = X \);
for all singleton teams \( X \) different from \( X_{{b}\overline{p}{f}} \) and \( X_{{b}{p}\overline{f}} \) we define:
\begin{align*}
    X_{{b}\overline{p}{f}} &  \pmPpeng X_{{b}{p}\overline{f}} \pmPpeng X &  X_{{b}\overline{p}{f}} & \pmPpeng X  \ ;
\end{align*}
for all non-empty teams \( Y \) and non-empty non-singleton teams \( Z \) we define:
\begin{equation*}
    Y \pmPpeng  Z  \text{ if } Y \subsetneq Z 
\end{equation*}
Then, for \( \nmableit = \nmableitWparam{\pmWpeng}  \) we obtain the following inference:%
\begin{align*}
    b & \nmableit f \tag{\enquote{birds usually fly}} \\
    p & \nmableit \neg f  \tag{\enquote{penguins usually do not fly}}\\
    b \land p & \notnmableit f  \tag{\enquote{penguin birds usually do not fly}}
\end{align*}
This is because we have:
\begin{align*}
    \minOf{\modelsOf{b}}{\pmPpeng} & = \{ X_{{b}\overline{p}{f}}  \} \subseteq \modelsOf{f}\\
    \minOf{\modelsOf{p}}{\pmPpeng} = \minOf{\modelsOf{b \land p}}{\pmPpeng}& = \{ X_{{b}{p}\overline{f}}  \}  \subseteq \modelsOf{\neg f} 
\end{align*}
\end{example}
Note that Example~\ref{ex:penguin} is agnostic about the concrete team logic used, i.e., it 
applies to {\PL}, {\pdl}, and {\pincl}.

\section{General Axiomatic Evaluation}\label{sec:generalresults}
We will now present general results on whether System~P hold for non-preferential and preferential team logics.
\smallskip\noindent\textbf{System~P and Non-Preferential Team Logics.}
For the non-preferential entailment \( \models \) of propositional team logics, we obtain that System~P is not satisfied by \pdl. For \PL and \pincl, we obtain that they satisfy System~P.
\begin{proposition}\label{prop:standardTS_or}
    The following statements holds for ${\models}$:
    \begin{enumerate}[(a)]
        \item \pdl satisfies System~C, but violates System~P.
        \item \PL and \pincl satisfy System~P.
    \end{enumerate}
\end{proposition}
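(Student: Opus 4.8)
The plan is to establish the three claims by verifying the System~P rules directly from the team-semantic definitions, using the closure properties collected in Proposition~\ref{prop:pdl_pincl_properties} and the earlier observations.

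\textbf{Part (a): \pdl\ satisfies System~C but violates System~P.}
For System~C one checks each rule \eqref{pstl:Ref}, \eqref{pstl:LLE}, \eqref{pstl:RW}, \eqref{pstl:CM}, \eqref{pstl:Cut} with $\nmableit$ instantiated to $\models$. Here \eqref{pstl:Ref}, \eqref{pstl:LLE}, \eqref{pstl:RW}, \eqref{pstl:Cut} are immediate because $\models$ is a Tarskian consequence relation: e.g.\ for \eqref{pstl:Cut}, if every model of $\alpha\wedge\beta$ is a model of $\gamma$ and every model of $\alpha$ is a model of $\beta$, then every model of $\alpha$ is a model of $\alpha\wedge\beta$ hence of $\gamma$. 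The only rule requiring a team-specific argument is \eqref{pstl:CM}: suppose $\alpha\models\beta$ and $\alpha\models\gamma$; if $X\models\alpha\wedge\beta$ then in particular $X\models\alpha$, so $X\models\gamma$ — note that this direction needs no closure property at all, so it holds for every propositional team logic. Thus System~C holds for $\models$ in \pdl\ (indeed in all three logics). For the failure of System~P, the obstacle is purely to exhibit a counterexample to \eqref{pstl:Or}; I would reuse the formulas from the commented-out Example, taking $\alpha=\dep{p_1}\wedge\dep{p_2}$, $\beta=\dep{p_1}\wedge\dep{p_3}$, $\gamma=\dep{p_1}$, verify $\alpha\models\gamma$ and $\beta\models\gamma$ (a dependence atom's conjunct is inherited), then take $Y=\{v\}$ with $v(p_1)=1$ and $Z=\{w\}$ with $w(p_1)=0$; their union $X=Y\cup Z$ satisfies $\alpha\vee\beta$ via the split $Y,Z$ (each side being a singleton satisfies any dependence atom) but $X\not\models\dep{p_1}$, so $\alpha\vee\beta\not\models\gamma$, refuting \eqref{pstl:Or}.

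\textbf{Part (b): \PL\ and \pincl\ satisfy System~P.}
Given (a)'s observation that System~C holds for $\models$ in every propositional team logic, it remains only to verify \eqref{pstl:Or} for \PL\ and \pincl. Suppose $\alpha\models\gamma$ and $\beta\models\gamma$, and let $X\models\alpha\vee\beta$, so $X=Y\cup Z$ with $Y\models\alpha$ and $Z\models\beta$. Then $Y\models\gamma$ and $Z\models\gamma$. For \pincl\ we finish by union closure (Proposition~\ref{prop:pdl_pincl_properties}): $Y\cup Z=X\models\gamma$. For \PL\ the same works since \PL-formulas are union closed; alternatively one can invoke flatness together with $\Delta\models^c\alpha\iff\Delta\models\alpha$, reducing \eqref{pstl:Or} for \PL\ to the classical disjunction property of $\models^c$. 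Either route is routine. Hence $\alpha\vee\beta\models\gamma$, establishing \eqref{pstl:Or}, and with System~C this gives System~P for both \PL\ and \pincl.

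\textbf{Main obstacle.} None of the steps is technically deep; the one point deserving care is making explicit \emph{why} the System~C rules other than \eqref{pstl:CM} transfer verbatim from the fact that $\models$ is a monotonic (Tarskian) entailment — in particular that \eqref{pstl:LLE} and \eqref{pstl:RW} are trivial when $\nmableit$ is itself $\models$ — and pinpointing that \eqref{pstl:Or} is the \emph{sole} rule that distinguishes \pdl\ from \PL\ and \pincl, with the distinction traced precisely to the absence of union closure for dependence atoms (Example~\ref{example_dep_atm_propositional}). So the write-up is mostly a careful bookkeeping of which closure property each rule consumes, plus the single explicit counterexample for \pdl.
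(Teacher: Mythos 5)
Your proposal is correct, and part (b) together with the \eqref{pstl:Or}-counterexample in part (a) essentially match the paper's argument (the paper uses the even shorter witness $\alpha=\beta=\gamma=\dep{p}$ from Example~\ref{example_dep_atm_propositional}, where $X\models\dep{p}\vee\dep{p}$ but $X\not\models\dep{p}$; your $\dep{p_1}\wedge\dep{p_2}$, $\dep{p_1}\wedge\dep{p_3}$, $\dep{p_1}$ works just as well). Where you genuinely diverge is the System~C part: the paper does \emph{not} verify the five rules directly, but instead observes that $\models$ is itself a preferential entailment relation --- for \pdl{} via the model $\pmWsup$ and Proposition~\ref{prop:pdl_examples}(2), and for \PL/\pincl{} via a preferential model in which all teams are $\pmP$-incomparable, so that $\minOf{\modelsOf{\alpha}}{\pmP}=\modelsOf{\alpha}$ --- and then invokes Proposition~\ref{prop:systemCteams}. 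Your route is a direct, self-contained check that every rule of System~C degenerates to reflexivity, transitivity and the intersection semantics of $\wedge$ once $\nmableit$ is instantiated to a Tarskian $\models$; this is more elementary, avoids the forward dependence on Proposition~\ref{prop:pdl_examples}, and makes explicit that System~C holds for \emph{every} propositional team logic with no closure property consumed. What the paper's route buys is economy and a conceptual point: monotonic team entailment is literally a special case of the preferential machinery, so System~C for $\models$ is an instance of Proposition~\ref{prop:systemCteams} rather than a separate calculation. Both are valid; your bookkeeping of which rule consumes which closure property (only \eqref{pstl:Or} needs union closure) is accurate and is exactly the distinction the paper draws between \pdl{} on one side and \PL/\pincl{} on the other.
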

Note that Example~\ref{example_dep_atm_propositional} is a witness for the second part of the statement (a) of Proposition~\ref{prop:standardTS_or}, i.e., \pdl violates \eqref{pstl:Or}.

\smallskip\noindent\textbf{System~P and Preferential Team Logics.}
Generally, System~C is satisfied by preferential team logics.
\begin{proposition}\label{prop:systemCteams}
    Let \( \pmW=\tuple{\pmS,\pmL,\pmP} \)  be a preferential model for a propositional team logic.
    The preferential entailment relation \( {\nmableitW} \) satisfies System~C.
\end{proposition}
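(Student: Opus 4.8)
The plan is to verify each of the five System~C rules — \eqref{pstl:Ref}, \eqref{pstl:LLE}, \eqref{pstl:RW}, \eqref{pstl:Cut}, \eqref{pstl:CM} — directly from the definition of \( \nmableitW \), using only the smoothness condition and the semantics of the underlying team logic. The key observation throughout is that \( \statesOf{\alpha\land\beta} = \statesOf{\alpha}\cap\statesOf{\beta} \) (since \( \ell(s)\models\alpha\land\beta \) iff \( \ell(s)\models\alpha \) and \( \ell(s)\models\beta \)), and that \( \alpha\models\beta \) implies \( \statesOf{\alpha}\subseteq\statesOf{\beta} \), while \( \alpha\equiv\beta \) implies \( \statesOf{\alpha}=\statesOf{\beta} \). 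None of these rely on downward closure, union closure, or any distinctive feature of \pdl versus \PL, which is why the result holds for all propositional team logics uniformly.

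First I would dispatch the easy rules. For \eqref{pstl:Ref}: every \( s\in\minOf{\statesOf{\alpha}}{\pmP} \) lies in \( \statesOf{\alpha} \), so \( \ell(s)\models\alpha \); hence \( \alpha\nmableitW\alpha \). For \eqref{pstl:LLE}: if \( \alpha\equiv\beta \) then \( \statesOf{\alpha}=\statesOf{\beta} \), so \( \minOf{\statesOf{\alpha}}{\pmP}=\minOf{\statesOf{\beta}}{\pmP} \), and the conclusion follows immediately from the hypothesis \( \alpha\nmableitW\gamma \). For \eqref{pstl:RW}: if \( \alpha\models\beta \) then \( \statesOf{\alpha}\subseteq\statesOf{\beta} \); given \( \gamma\nmableitW\alpha \), every \( s\in\minOf{\statesOf{\gamma}}{\pmP} \) satisfies \( \ell(s)\models\alpha \), hence \( \ell(s)\models\beta \) by monotonicity of \( \models \) on interpretations, giving \( \gamma\nmableitW\beta \).

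The two rules involving conjunction require a small lemma, which is the technical heart of the argument: if \( \alpha\nmableitW\beta \), then \( \minOf{\statesOf{\alpha}}{\pmP}\subseteq\statesOf{\alpha\land\beta} \), and moreover every minimal element of \( \statesOf{\alpha} \) is minimal in \( \statesOf{\alpha\land\beta} \) (this is trivial since \( \statesOf{\alpha\land\beta}\subseteq\statesOf{\alpha} \)), and conversely, using smoothness, one shows \( \minOf{\statesOf{\alpha\land\beta}}{\pmP}\subseteq\minOf{\statesOf{\alpha}}{\pmP} \) whenever \( \alpha\nmableitW\beta \). Indeed, take \( s\in\minOf{\statesOf{\alpha\land\beta}}{\pmP} \); then \( s\in\statesOf{\alpha} \), and if \( s \) were not minimal in \( \statesOf{\alpha} \), smoothness gives some \( s'\in\minOf{\statesOf{\alpha}}{\pmP} \) with \( s'\pmP s \); but \( s'\in\statesOf{\alpha\land\beta} \) by the hypothesis \( \alpha\nmableitW\beta \), contradicting minimality of \( s \) in \( \statesOf{\alpha\land\beta} \). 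Combining both inclusions yields \( \minOf{\statesOf{\alpha\land\beta}}{\pmP}=\minOf{\statesOf{\alpha}}{\pmP} \) when \( \alpha\nmableitW\beta \). From this equality, \eqref{pstl:CM} is immediate (the hypothesis \( \alpha\nmableitW\gamma \) transfers along equal minimal sets), and \eqref{pstl:Cut} follows too: from \( \alpha\land\beta\nmableitW\gamma \) and the equality, \( \minOf{\statesOf{\alpha}}{\pmP}=\minOf{\statesOf{\alpha\land\beta}}{\pmP}\subseteq\statesOf{\gamma} \).

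I expect the main obstacle to be the correct use of smoothness in the lemma — specifically, ensuring that the witness \( s' \) minimal in \( \statesOf{\alpha} \) actually lies in \( \statesOf{\alpha\land\beta} \), which is precisely where the hypothesis \( \alpha\nmableitW\beta \) is consumed, and being careful that minimality of \( s \) in the smaller set \( \statesOf{\alpha\land\beta} \) is genuinely contradicted by \( s'\pmP s \). Everything else is a routine unfolding of definitions, and notably none of the argument uses standardness (S1)/(S2), so the proof applies verbatim to arbitrary preferential models over propositional team logics.
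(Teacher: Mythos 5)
Your proof is correct and follows essentially the same route as the paper's: each rule of System~C is verified by unfolding the definition of \( \nmableitW \), with \eqref{pstl:Cut} and \eqref{pstl:CM} both handled via the smoothness-based argument showing \( \minOf{\statesOf{\alpha\land\beta}}{\pmP}=\minOf{\statesOf{\alpha}}{\pmP} \) under the hypothesis \( \alpha\nmableitW\beta \). Your only (welcome) refinement is to extract that equality as a single explicit lemma, whereas the paper proves the nontrivial inclusion inside the \eqref{pstl:CM} case and asserts the equality more briefly in the \eqref{pstl:Cut} case.
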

The following Example~\ref{ex:violateOR} witnesses that, in general, \eqref{pstl:Or}, and hence, System~P, is violated by preferential team logics.
\begin{example}\label{ex:violateOR}
    Assume that \( N=\{p,q\}\subseteq\Prop \) holds.
    The following valuations \( v_1,v_2 ,v_3 \) will be important: %
    \begin{align*}
        v_1(p) &=v_1(q)=v_2(q)=1 & v_2(p)&= v_3(p)=v_3(q)=0
    \end{align*}
    We consider the teams \( X_{pq}=\{ v_1\} \), \( X_{\overline{p}{q}}=\{ v_2\} \), and  \( X_{p\leftrightarrow q}=\{ v_1,v_3\} \).
    Let \( \pmWpq = \tuple{\pmSpq,\pmLpq,\pmPpq} \) be the preferential model such that
    \begin{align*}
        \pmSpq & = \{ s_{\Int} \mid \Int \text{ is a  non-empty team} \} & \pmLpq(s_X) & = X         
    \end{align*}
    holds, and such that \( \pmPpq \) is the strict partial order given by
    \begin{align*}
        X_{p\leftrightarrow q} & \pmPpq  X_{pq}             & X_{pq}              & \pmPpq X \\
        X_{p\leftrightarrow q} & \pmPpq X_{\overline{p}{q}} & X_{\overline{p}{q}} & \pmPpq X
    \end{align*}
    where \( X \) stands for every team different from \(  \pmPpq  X_{pq} \), \( X_{\overline{p}{q}} \) and  \(  X_{p\leftrightarrow q} \).
We     obtain the following preferential entailments:
    \begin{align*}
        p & \nmableitWparam{\pmWpq} q &  \neg p & \nmableitWparam{\pmWpq} q & p\lor\neg p &\notnmableitWparam{\pmWpq} q
    \end{align*}
\end{example}
\begin{proposition}\label{prop:violoateOr}
    The entailment relation \( \nmableit_{\!\pmWpq} \) for \PL, respectively \pdl and \pincl, 
    violates \eqref{pstl:Or}.
\end{proposition}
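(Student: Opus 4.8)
The goal is to show that the preferential model $\pmWpq$ from Example~\ref{ex:violateOR} violates the rule \eqref{pstl:Or}, and that this works uniformly for {\PL}, {\pdl}, and {\pincl}. The strategy is to verify the three displayed entailments asserted at the end of Example~\ref{ex:violateOR}, namely $p \nmableitWparam{\pmWpq} q$, $\neg p \nmableitWparam{\pmWpq} q$, and $p \lor \neg p \notnmableitWparam{\pmWpq} q$, since these together instantiate the premises of \eqref{pstl:Or} (with $\alpha := p$, $\beta := \neg p$, $\gamma := q$) while falsifying its conclusion. The key observation that makes this agnostic about the team logic is that all of $p$, $\neg p$, $q$, and $p \lor \neg p$ are already \PL-formulas, and by the flatness property (and the fact that singletons and unions of teams behave identically in \PL, \pdl, \pincl on purely classical formulas), their model classes $\modelsOf{\cdot}$ are the same set of teams in all three logics. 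Hence the computation of minimal models is identical in each case.

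\textbf{Key steps.} First I would compute $\modelsOf{p}$ restricted to the states of $\pmWpq$: these are exactly the non-empty teams $\Int$ over a domain containing $p$ all of whose valuations send $p$ to $1$. Among the named teams, $X_{pq} = \{v_1\}$ is such a team (and lies in $\modelsOf{p}$), while $X_{\overline p q} = \{v_2\}$ and $X_{p \leftrightarrow q} = \{v_1, v_3\}$ are not (they contain a valuation with $p \mapsto 0$). By the definition of $\pmPpq$, $X_{pq}$ is $\pmPpq$-below every other team except $X_{p\leftrightarrow q}$ and $X_{\overline p q}$, and neither of the latter two is in $\modelsOf{p}$; so $X_{pq}$ is the unique $\pmPpq$-minimal element of $\modelsOf{p}$. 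Since every valuation in $X_{pq}$ sends $q$ to $1$, we get $\modelsOf{p} \ni X_{pq}$ with $\pmLpq(s_{X_{pq}}) \models q$, i.e.\ $p \nmableitWparam{\pmWpq} q$. Symmetrically, $\modelsOf{\neg p}$ consists of non-empty teams all of whose valuations send $p$ to $0$; $X_{\overline p q} = \{v_2\}$ is such a team, and it is $\pmPpq$-below all teams except $X_{p\leftrightarrow q}$ and $X_{pq}$, neither of which is in $\modelsOf{\neg p}$; hence $X_{\overline p q}$ is the unique minimal model of $\neg p$, and since $v_2(q) = 1$ we obtain $\neg p \nmableitWparam{\pmWpq} q$. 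Finally, $\modelsOf{p \lor \neg p}$ is the class of \emph{all} non-empty teams (any team splits into its $p\mapsto 1$ part and its $p\mapsto 0$ part), so in particular $X_{p \leftrightarrow q} \in \modelsOf{p \lor \neg p}$; by the definition of $\pmPpq$, $X_{p\leftrightarrow q}$ has nothing $\pmPpq$-below it, so it is $\pmPpq$-minimal in $\modelsOf{p \lor \neg p}$; but $X_{p \leftrightarrow q} = \{v_1, v_3\}$ contains $v_3$ with $v_3(q) = 0$, so $X_{p \leftrightarrow q} \not\models q$, witnessing $p \lor \neg p \notnmableitWparam{\pmWpq} q$.

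\textbf{Remaining checks.} I would also verify that $\pmWpq$ is genuinely a standard preferential model: conditions (S1) and (S2) are immediate from $\pmSpq = \{s_\Int \mid \Int \text{ non-empty}\}$ and $\pmLpq(s_\Int) = \Int$; smoothness holds because $\pmPpq$, restricted to any $\statesOf{\varphi}$, has the property that the "bottom layer" ($X_{p\leftrightarrow q}$, then $X_{pq}$ and $X_{\overline p q}$, then everything else) is finite at the relevant levels and any state is above one of these minimal ones, so minimal elements always exist below every state. Strictly speaking one should check $\pmPpq$ is a strict partial order (irreflexive, transitive): transitivity needs the chains $X_{p\leftrightarrow q} \pmPpq X_{pq} \pmPpq X$ and $X_{p\leftrightarrow q} \pmPpq X_{\overline p q} \pmPpq X$ to be closed, which holds by the stated clauses.

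\textbf{Main obstacle.} The only real subtlety is the claim that the three formulas have the same model class across {\PL}, {\pdl}, {\pincl} and that minimality computations therefore transfer; this rests on flatness for \PL-formulas, which holds in all three logics since the semantics of $p$, $\neg p$, $\wedge$, $\vee$ on classical formulas is unchanged by adding dependence or inclusion atoms. Once that is pinned down, everything else is a direct unwinding of the definition of $\minOf{\cdot}{\pmPpq}$ and $\nmableitWparam{\pmWpq}$. There is no deep argument here — the example is engineered so that $X_{p\leftrightarrow q}$, which refutes $q$, is forced into the minimal set of $p \lor \neg p$ precisely because it is placed at the very bottom of $\pmPpq$, while being excluded from $\modelsOf{p}$ and $\modelsOf{\neg p}$ by its mixed valuation on $p$.
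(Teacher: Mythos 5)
Your proposal is correct and takes essentially the same route as the paper: the paper's justification for Proposition~\ref{prop:violoateOr} is precisely Example~\ref{ex:violateOR}, whose three displayed entailments you verify by computing $\minOf{\modelsOf{p}}{\pmPpq}=\{X_{pq}\}$, $\minOf{\modelsOf{\neg p}}{\pmPpq}=\{X_{\overline{p}q}\}$, and observing that the $\pmPpq$-bottom element $X_{p\leftrightarrow q}$ is a minimal model of $p\lor\neg p$ falsifying $q$. Your additional remarks on flatness (making the counterexample uniform across \PL, \pdl, \pincl) and on smoothness of $\pmPpq$ are correct and only make explicit what the paper leaves implicit.
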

We can reestablish satisfaction of System~P, by demanding the \eqref{eq:StarProperty}-property, given in the following proposition.
\begin{proposition}\label{prop:SystemPifStar}
Let \( \pmW \) be a preferential model for some preferential team logic.
If \eqref{eq:StarProperty} is satisfied for all formulas \( A,B \), then \( \nmableitW \) satisfies System~P, whereby\footnote{Abbreviation: \( \minOf{\modelsOf{A}}{\pmP} = \{ \pmL(s) \mid s\in \minOf{S(A)}{\pmP}  \} \)}:
\begin{equation}
    \minOf{\modelsOf{A \lor B}}{\pmP} \subseteq \minOf{\modelsOf{A}}{\pmP} \cup \minOf{\modelsOf{B}}{\pmP}	\tag{\( \star \)}\label{eq:StarProperty}
\end{equation}
\end{proposition}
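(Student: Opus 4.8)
## Proof Plan

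The plan is to verify each of the six rules of System~P directly from the definition of \( \nmableitW \), using the \eqref{eq:StarProperty}-property only where it is actually needed, namely for \eqref{pstl:Or}. By Proposition~\ref{prop:systemCteams} we already know that System~C --- that is, \eqref{pstl:Ref}, \eqref{pstl:LLE}, \eqref{pstl:RW}, \eqref{pstl:CM}, and \eqref{pstl:Cut} --- holds for \( \nmableitW \) for any preferential model, without any extra hypothesis. So the entire burden of the proof is to show that \eqref{eq:StarProperty} implies \eqref{pstl:Or}; everything else is inherited.

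For \eqref{pstl:Or}, I would argue as follows. Suppose \( \alpha \nmableitW \gamma \) and \( \beta \nmableitW \gamma \); I want \( \alpha \lor \beta \nmableitW \gamma \). Unfolding the definition, this means: for every \( s \in \minOf{\statesOf{\alpha \lor \beta}}{\pmP} \), we have \( \pmL(s) \models \gamma \). Take such an \( s \) and let \( X = \pmL(s) \). Since \( s \) is minimal in \( \statesOf{\alpha \lor \beta} \), in particular \( X \models \alpha \lor \beta \), so \( X \in \modelsOf{\alpha \lor \beta} \). The key point is that \( X \) is actually \emph{minimal} among models of \( \alpha \lor \beta \) with respect to the induced order on teams --- here I would need the small observation that minimality of \( s \) in \( \statesOf{\alpha\lor\beta} \) together with condition (S2) (every non-empty team is realized as \( \pmL(s') \) for some state) and (S1) forces \( X \in \minOf{\modelsOf{\alpha\lor\beta}}{\pmP} \); otherwise some team \( X' \subsetneq \)-preferred... more precisely, some \( X' \) with a state \( s' \), \( \pmL(s')=X' \), \( s' \pmP s \), would contradict minimality of \( s \). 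Hence \( X \in \minOf{\modelsOf{\alpha \lor \beta}}{\pmP} \). Now apply \eqref{eq:StarProperty}: \( X \in \minOf{\modelsOf{\alpha}}{\pmP} \cup \minOf{\modelsOf{\beta}}{\pmP} \). Without loss of generality \( X \in \minOf{\modelsOf{\alpha}}{\pmP} \), i.e., \( X = \pmL(s'') \) for some \( s'' \in \minOf{\statesOf{\alpha}}{\pmP} \). Since \( \alpha \nmableitW \gamma \), we get \( \pmL(s'') = X \models \gamma \), as desired. The symmetric case gives the same conclusion.

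The one subtlety --- and the main thing to get right --- is the translation between minimality at the level of \emph{states} (how \( \nmableitW \) and \eqref{eq:StarProperty} are literally phrased, via \( \statesOf{\cdot} \) and \( \minOf{S(\cdot)}{\pmP} \)) and minimality at the level of \emph{teams} (the footnote's abbreviation \( \minOf{\modelsOf{A}}{\pmP} = \{ \pmL(s) \mid s \in \minOf{S(A)}{\pmP} \} \)). Because the model is standard, (S1) and (S2) guarantee that teams and states are in good enough correspondence --- in particular multiple states may carry the same team, but that does not matter since \( \pmL(s) \models \psi \) depends only on \( \pmL(s) \). I would make this bookkeeping explicit once at the start and then use it freely. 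I expect no serious obstacle beyond this; the argument is essentially a reorganization of the KLM-style proof that \eqref{pstl:Or} holds once the minimal models of a disjunction are contained in the union of the minimal models of the disjuncts, with the smoothness condition ensuring all the relevant minimal elements exist.
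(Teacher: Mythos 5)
Your proof is correct and takes essentially the same route as the paper's: System~C is inherited from Proposition~\ref{prop:systemCteams} with no extra hypothesis, and \eqref{eq:StarProperty} yields \eqref{pstl:Or} by precisely the containment argument you describe, namely \( \minOf{\modelsOf{\alpha\lor\beta}}{\pmP} \subseteq \minOf{\modelsOf{\alpha}}{\pmP} \cup \minOf{\modelsOf{\beta}}{\pmP} \subseteq \modelsOf{\gamma} \). The one remark is that your appeal to (S1)/(S2) for the state-to-team translation is unnecessary: by the footnote's abbreviation, \( s \in \minOf{\statesOf{\alpha\lor\beta}}{\pmP} \) gives \( \pmL(s) \in \minOf{\modelsOf{\alpha\lor\beta}}{\pmP} \) by definition, so the bookkeeping is immediate and the proposition holds for arbitrary, not necessarily standard, preferential models.
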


%
%
%
%
%
%
%
%
%
%
%
%
%
%
%
%
%
%
%
%
%
%
%
%
%
%
%
%
%
%
%
%
%
%
%
%
%
%
%
%
%

%
%
%
%
%
%
%
%
%
%
%
%
%
%
%
%
%
%
%
%
%
%
%
%
%
%
%
%
%
%
%
%
%
%
%
%
%
%
%
%
%
%
%
%
%
%
%
%
%
%
%
%
%
%
%
%
%
%
%
%
%
%
%
%
%
%
%
%
%
%
%
%
%
%
%
%
%
%
%
%
%
%
%
%
%
%
%
%
%
%
%
%
%
%
%
%
%
%
%
%
%
%
%
%
%
%
%
%
%
%
%
%
%
%
%
%
%
%
%
%
%
%
%
%
%
%
%
%
%
%
%
%
%
%
%
%
%

%
%

%
%
%
%
%
%
%
%
%
%
%
%
%
%
%
%
%
%
%
%
%
%
%
%
%
%
%
%
%
%
%
%
%
 
\section{Results for Preferential Dependence Logics}\label{sec:resultsPDL}
For preferential dependence logic, we provide additional results to those of Section~\ref{sec:generalresults}.

\smallskip\noindent\textbf{System~P and Preferential Dependence Logic.}
The main contribution is a characterization of exactly those preferential entailment relations that satisfy all rules of System~P.
Central of this result is the following property for a preferential model \( \pmW =\tuple{\pmS,\pmL,\pmP} \), where \( s,s'\in\pmS \) are states:
\begin{equation}
    \!\text{for all\,} s,\, |\pmL(\!s\!)|\,{>}\,1 \text{,\,exists } s'  \text{ with } \pmL(\!s') \,{\subsetneq}\, \pmL(\!s\!) \text{ and } s'\,{\pmP}\, s \tag{\( \triangle \)}\label{eq:TriangleProperty}
\end{equation}
The \eqref{eq:TriangleProperty}-property demands (when understanding states as teams) that for each non-singleton team \( X \) exists a proper subteam \( Y \) of \( X \) that is preferred over \( X \).
For this property, we can show the following theorem.
\begin{theorem}\label{main}
    Let \( \pmW=\tuple{\pmS,\pmL,\pmP} \)  be a preferential model for \pdl.
    The following statements are equivalent:
    \begin{enumerate}[(i)]
        \item \( \nmableitW \) satisfies System~P.
        \item \( \pmW \) satisfies the \( \triangle \)-property.
        \item The \eqref{eq:StarProperty}-property holds for all \( A,B\in\pdl \).
    \end{enumerate}
\end{theorem}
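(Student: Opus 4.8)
The plan is to prove the cycle $(iii) \Rightarrow (i) \Rightarrow (ii) \Rightarrow (iii)$. The implication $(iii) \Rightarrow (i)$ is already in hand: it is exactly Proposition~\ref{prop:SystemPifStar}, since that proposition says the \eqref{eq:StarProperty}-property for all formulas suffices for System~P (and System~C comes for free from Proposition~\ref{prop:systemCteams}). So the real work is $(i) \Rightarrow (ii)$ and $(ii) \Rightarrow (iii)$.

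For $(i) \Rightarrow (ii)$, I would argue contrapositively: assume \eqref{eq:TriangleProperty} fails, so there is a state $s$ with a non-singleton team $X = \pmL(s)$ such that no state $s'$ with $\pmL(s') \subsetneq X$ satisfies $s' \pmP s$. The aim is to exhibit a violation of \eqref{pstl:Or}. The key observation is the characteristic feature of \pdl flagged already in Example~\ref{example_dep_atm_propositional}: for any non-singleton team $X$, picking two distinct valuations $u \ne w$ in $X$, one can write $X$ as a union of two proper subteams and find two \pdl-formulas (built from conjunctions of literals and constancy atoms $\dep{p}$, along the lines of the commented-out Example~\ref{ex:pdl_violates_or}) whose disjunction $A \lor B$ is satisfied by $X$ but such that $A$ and $B$ are each only satisfied by teams that are "consistent" in some coordinate, never by $X$ itself. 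I would then engineer $A$ and $B$ so that every model of $A$ is a subteam of $X$ lying in one part, every model of $B$ a subteam in the other part, with $s$ minimal in $\statesOf{A \lor B}$ (this uses the failure of \eqref{eq:TriangleProperty}, together with smoothness and (S2) to control what lies below $s$), yet $s$ is not in $\minOf{\modelsOf{A}}{\pmP} \cup \minOf{\modelsOf{B}}{\pmP}$ simply because $X \not\models A$ and $X \not\models B$. Choosing $\gamma$ to be, say, $A \lor B$ minus the offending valuation — more concretely a formula true exactly of the downward closure of the two parts but false at $X$ — yields $A \nmableit \gamma$, $B \nmableit \gamma$ but $A \lor B \not\nmableit \gamma$. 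Getting the bookkeeping of the smoothness condition right here — ensuring $s$ really is minimal among $A \lor B$-models — is the step I expect to be the main obstacle, and it is where the specific downward-closure property of \pdl (Proposition~\ref{prop:pdl_pincl_properties}) is essential.

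For $(ii) \Rightarrow (iii)$, assume \eqref{eq:TriangleProperty} and fix $A, B \in \pdl$ and a team $X \in \minOf{\modelsOf{A \lor B}}{\pmP}$; I must show $X \in \minOf{\modelsOf{A}}{\pmP}$ or $X \in \minOf{\modelsOf{B}}{\pmP}$. First, \eqref{eq:TriangleProperty} forces $X$ to be a singleton: if $|X| > 1$ there would be a proper subteam $Y \subsetneq X$ with a state below the state for $X$; but $Y \subseteq X$ and downward closure of \pdl give $Y \models A \lor B$, contradicting minimality of $X$. Now $X = \{v\}$ is a singleton, so from $X \models A \lor B$ and the splitting $X = Y \cup Z$ with $Y \models A$, $Z \models B$, one of $Y, Z$ must equal $\{v\} = X$ (the other being $\emptyset$ or $\{v\}$); say $X \models A$. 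It remains to check $X$ is \emph{minimal} among $A$-models, i.e. no $A$-model sits strictly below $X$ in $\pmP$. Any such model would a fortiori be an $(A \lor B)$-model below $X$, contradicting $X \in \minOf{\modelsOf{A \lor B}}{\pmP}$. This closes the cycle.

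One technical caveat to handle in passing: the statements $A \nmableit \psi$ range over teams via states, and I should be careful, using (S1)–(S2) and the footnote convention $\minOf{\modelsOf{A}}{\pmP} = \{\pmL(s) \mid s \in \minOf{S(A)}{\pmP}\}$, that minimality of a \emph{team} and minimality of its \emph{state} align — this is immediate since $\pmL$ need not be injective but (S2) guarantees every non-empty team is realized, and the entailment only sees $\ell$-images. With that dictionary fixed, the three implications above give the claimed equivalence.
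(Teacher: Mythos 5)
Your decomposition --- the cycle (iii)\,$\Rightarrow$\,(i)\,$\Rightarrow$\,(ii)\,$\Rightarrow$\,(iii) --- is logically the same as the paper's (which proves (ii)\,$\Leftrightarrow$\,(iii) as a separate lemma and combines it with Proposition~\ref{prop:SystemPifStar} and the contrapositive of (i)\,$\Rightarrow$\,(ii)), and your (ii)\,$\Rightarrow$\,(iii) argument is exactly the paper's: \eqref{eq:TriangleProperty} plus downward closure forces minimal models of $A\lor B$ to be singletons, on singletons $\lor$ is Boolean, and minimality transfers back because every model of $A$ is a model of $A\lor B$ (via the empty-team property).

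The one substantive gap is in (i)\,$\Rightarrow$\,(ii), where you leave the witnessing formulas unconstructed. Taking $A=\Theta_Y$, $B=\Theta_Z$ for a splitting $X=Y\cup Z$ into proper subteams is fine, but your proposed $\gamma$ --- a formula \enquote{true exactly of the downward closure of the two parts}, i.e.\ with $\modelsOf{\gamma}=\downset{Y}\cup\downset{Z}$ --- is a \emph{Boolean} (not split) disjunction of $\Theta_Y$ and $\Theta_Z$, and that connective is not in the syntax of \pdl as defined in the paper; its definability would require the expressive completeness of \pdl for downward-closed team properties, which is neither stated nor needed here. You also do not need $\modelsOf{\gamma}$ to be exactly that set: any $\gamma$ with $\downset{Y}\cup\downset{Z}\subseteq\modelsOf{\gamma}$ and $X\notin\modelsOf{\gamma}$ suffices, and the paper obtains one by bounding cardinality. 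With $\theta=\bigwedge_{1\le i\le n}\dep{p_i}$, the formula $\Theta_X\wedge(\theta\vee\cdots\vee\theta)$ with $m$ disjuncts defines the subteams of $X$ of size at most $m$; choosing $l+k=|X|$ with $1\le l\le k$ for $\alpha$ and $\beta$ gives $\alpha\models\beta$ and $X\not\models\beta$, so one may simply take $\gamma:=\beta$ and derive $\alpha\nmableitW\beta$ and $\beta\nmableitW\beta$ from \eqref{pstl:Ref} and \eqref{pstl:RW}. This choice also settles the point you flag as the main obstacle: all models of $\alpha\lor\beta$ are subteams of $X$, so the failure of \eqref{eq:TriangleProperty} (with smoothness, in case some other state labelled $X$ lies below $s$) yields a minimal state of $\statesOf{\alpha\lor\beta}$ labelled by $X$, which falsifies $\beta$ and so refutes \eqref{pstl:Or}. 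Finally, the theorem is stated for arbitrary preferential models for \pdl, so you should not appeal to (S1)--(S2); the failure of \eqref{eq:TriangleProperty} already hands you the required state $s$ with $|\pmL(s)|>1$.
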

\noindent For Theorem~\ref{main}, one shows the equivalence of (ii) and (iii) and that (i) implies (ii); (iii) to (i) is given by Proposition~\ref{prop:SystemPifStar}.

\smallskip\noindent\textbf{Relation to Dependence Logic and Classical Entailment.}
Theorem~\ref{main} and the \( \triangle \)-property imply that preferential dependence logics that satisfy System P are quintessentially the same as their flatting counterpart in (preferential) propositional logic with classical (non-team) semantics.
\begin{theorem}\label{col:triangle_flattening}
    Let \( \pmW=\tuple{\pmS,\pmL,\pmP} \) be a preferential model over \pdl that satisfies System~P. Then 
    $A \nmableitW B$ iff  $A^f \nmableitWparam{\pmW'} B^f$,
    where \( \pmW'=\tuple{\pmS',\pmL',\pmP'} \) denotes the preferential model for classical propositional logic \PL, i.e., over \( \models^c \) and valuations induced by the singleton teams in $W$.
\end{theorem}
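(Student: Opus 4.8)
The plan is to prove Theorem~\ref{col:triangle_flattening} by unpacking both sides of the claimed equivalence into statements about minimal teams and minimal valuations, and then using the \eqref{eq:TriangleProperty}-property (available from Theorem~\ref{main}, since $\pmW$ satisfies System~P) to collapse the team-level picture to the singleton level. First I would make precise what $\pmW'$ is: its state set $\pmS'$ consists of the states $s\in\pmS$ with $|\pmL(s)|=1$, the labelling $\pmL'(s)$ is the valuation $v$ such that $\pmL(s)=\{v\}$, and the order $\pmP'$ is the restriction of $\pmP$ to those states. One should first check that $\pmW'$ is a genuine preferential model for \PL\ (over $\models^c$): smoothness of $\statesOf{\alpha}$ in $\pmW'$ for a classical $\alpha$ follows from smoothness of $\statesOf{\beta}$ in $\pmW$ for a suitable \pdl-formula $\beta$ whose singleton models are exactly the valuations of $\alpha$ — concretely one can take $\beta$ to be $\alpha$ itself, using \eqref{flattening} and flatness of classical formulas, so that the singleton states satisfying $\alpha$ in $\pmW$ are precisely $\statesOf{\alpha}$ in $\pmW'$, and minimality transfers because $\pmP'$ is a restriction and (by smoothness in $\pmW$ together with \eqref{eq:TriangleProperty}) no non-singleton state can be $\pmP$-below a singleton state of $\statesOf{\alpha}$ without a singleton subteam witness.

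**The key step** is the following lemma, which I would isolate and prove first: for every \pdl-formula $A$,
\begin{equation*}
  \minOf{\modelsOf{A}}{\pmP} \ \subseteq\ \{\,\{v\} \mid \{v\}\in\minOf{\modelsOf{A^f}}{\pmP'}\,\},
\end{equation*}
and moreover every $\{v\}$ in the right-hand set is $\pmL$-image of some state in $\minOf{S(A)}{\pmP}$. The forward inclusion uses \eqref{eq:TriangleProperty}: if $X\in\minOf{\modelsOf{A}}{\pmP}$ and $|X|>1$, then by downwards closure of \pdl\ (Proposition~\ref{prop:pdl_pincl_properties}) every $\{v\}\subseteq X$ is also a model of $A$, and \eqref{eq:TriangleProperty} supplies a proper subteam $Y\subsetneq X$ with $Y\pmP X$ — but $Y$ is still a model of $A$ by downwards closure, contradicting minimality of $X$. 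Hence minimal models of $A$ are singletons; and a singleton $\{v\}$ models $A$ iff $v\models A^f$ by \eqref{flattening}, so $\minOf{\modelsOf{A}}{\pmP}$ is exactly the set of $\pmP$-minimal singleton models of $A^f$, which (because $\pmP'$ is the restriction of $\pmP$ and, again by the argument above, non-singleton states are never $\pmP$-below singleton models of $A^f$ without themselves being dominated) coincides with $\minOf{\modelsOf{A^f}}{\pmP'}$ read as valuations.

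**Once that lemma is in hand**, the theorem is immediate: $A\nmableitW B$ means $\ell(s)\models B$ for all $s\in\minOf{S(A)}{\pmP}$, i.e.\ $X\models B$ for all $X\in\minOf{\modelsOf{A}}{\pmP}$; by the lemma these $X$ are exactly the singletons $\{v\}$ with $v\in\minOf{\modelsOf{A^f}}{\pmP'}$; and $\{v\}\models B$ iff $v\models B^f$ by \eqref{flattening}. So $A\nmableitW B$ iff $v\models B^f$ for all $v\in\minOf{\modelsOf{A^f}}{\pmP'}$, which is precisely $A^f\nmableitWparam{\pmW'}B^f$.

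**The main obstacle** I anticipate is not the team-to-singleton collapse itself — that is a clean consequence of \eqref{eq:TriangleProperty} plus downwards closure — but the bookkeeping needed to verify that $\pmW'$ is a \emph{standard} preferential model in its own right and that restricting $\pmP$ to singleton states does not accidentally create or destroy minimality. The delicate point is ruling out that a non-singleton state $s$ with $\pmL(s)$ a superset-or-incomparable team sits $\pmP$-strictly below a singleton model of $A^f$; this is where one must invoke smoothness of $S(A)$ in $\pmW$ together with \eqref{eq:TriangleProperty} to push any such $s$ down to a singleton witness, and care is needed because \eqref{eq:TriangleProperty} only guarantees a \emph{proper subteam} below $s$, so one may need to iterate (the subteam ordering is well-founded on finite teams) until a singleton is reached. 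I would handle this by an induction on $|\pmL(s)|$, which terminates since teams here are finite.
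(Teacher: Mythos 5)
Your proposal is correct and follows essentially the same route as the paper's proof: invoke Theorem~\ref{main} to obtain the \eqref{eq:TriangleProperty}-property, conclude via downwards closure that the $\pmP$-minimal models of $A$ are singleton teams, and then transfer to $\pmW'$ using \eqref{flattening} and the fact that $\pmP'$ agrees with $\pmP$ on singleton states. The only difference is that you spell out details the paper leaves implicit (that $\pmW'$ is itself a preferential model, and the iterated-subteam argument showing no non-singleton state can disturb minimality among singletons), which is careful but not a departure in method.
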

As a last result, we consider preferential models that characterize the \( \models \) entailment relation, as well as the entailment relation for classical formulas \( \models^c \).
Let \( \pmWsub = \tuple{\pmSsub,\pmLsub,\pmPsub} \) and \( \pmWsup = \tuple{\pmSsup,\pmLsup,\pmPsup} \) be the preferential models such that the following holds:
\begin{align*}
    &\pmSsub = \pmSsup = \{ s_{\Int} \mid \Int \text{ is a non-empty team} \} \\
    &\pmLsub(s_X) = \pmLsup(s_X)   = X \\ 
    & Y \pmPsub X  \text{ if } Y \subsetneq X \qquad
    Y \pmPsup X  \text{ if } X \subsetneq Y
\end{align*}
In \( \pmWsub \) and \pmWsup, for each team \( X \) there is exactly one state \( s_X \) that is labelled by \( X \).
In \( \pmPsub \), subsets of a team are preferred, whereas in \( \pmPsup \) superset teams are preferred.

The preferential model \( \pmWsup \) gives rise to the \pdl entailment relation \( \models \), and the preferential model \( \pmWsup \) gives rise to classical entailment of the flattening \( \models^c \).
\begin{proposition}\label{prop:pdl_examples}
    For all \pdl-formulas \( A , B \) we have:
    \begin{enumerate}[(1)]
    	\item \( A \nmableitWparam{\pmWsub} B \text{ if and only if }  A^f \models^c B^f \)
    	\item \( A \nmableitWparam{\pmWsup} B \text{ if and only if }  A \models B \)
    \end{enumerate}
\end{proposition}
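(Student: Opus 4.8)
\textbf{Proof plan for Proposition~\ref{prop:pdl_examples}.}
The plan is to analyze the two preferential models separately, in each case identifying the minimal models $\minOf{\modelsOf{A}}{\pmP}$ explicitly and then checking the entailment condition against $\modelsOf{B}$. Both models are clearly standard (every non-empty team labels exactly one state, the empty team is excluded), so the only thing to verify besides the entailment equivalences is smoothness; I would dispatch that quickly by noting that in $\pmWsub$ the order $\subsetneq$ restricted to $\modelsOf{A}$ is well-founded (teams over a fixed finite domain are finite, and $A$ mentions only finitely many variables, so all relevant teams are finite), hence every downward chain terminates, giving smoothness; and dually for $\pmWsup$, where $\subsetneq$-ascending chains among models of a downward-closed $A$ are bounded by the union of all models, which is itself a model by... wait — $A$ is \pdl and need not be union-closed, so instead I would argue smoothness of $\pmWsup$ via the fact that $\modelsOf{A}$ over a fixed finite domain is a finite set of teams, so $\pmPsup$ restricted to it is a finite strict order and minimal elements exist below everything.

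For part~(1), the key observation is that in $\pmWsub$ the minimal models of $A$ are exactly the $\subsetneq$-minimal teams satisfying $A$, and by downward closure of \pdl (Proposition~\ref{prop:pdl_pincl_properties}) together with \eqref{flattening}, a team $X$ satisfies $A$ iff every singleton $\{v\}\subseteq X$ does, iff every $v\in X$ satisfies $A^f$. Hence the $\subsetneq$-minimal models of $A$ are precisely the singletons $\{v\}$ with $v\models^c A^f$ (and, if $A$ has the empty team as its only... no: since $\emptyset\notin\pmSsub$, the genuinely minimal states are these singletons; if no singleton satisfies $A^f$ then $A$ is satisfied only by $\emptyset$, which is not in $\pmSsub$, so $\minOf{\modelsOf{A}}{\pmPsub}=\emptyset$ and $A\nmableitWparam{\pmWsub}B$ holds vacuously, matching $A^f\models^c B^f$ which is also vacuous as $\modelsOf{A^f}^c=\emptyset$). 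Then $A\nmableitWparam{\pmWsub}B$ holds iff every such $\{v\}$ satisfies $B$, iff every $v\models^c A^f$ satisfies $B^f$ (again by \eqref{flattening}), iff $A^f\models^c B^f$.

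For part~(2), in $\pmWsup$ the minimal models of $A$ are the $\subsetneq$-maximal teams satisfying $A$; since \pdl is downward closed, if any non-empty team satisfies $A$ then there is a maximal one, and I claim $A\nmableitWparam{\pmWsup}B$ iff $A\models B$. The direction ($\Leftarrow$) is immediate since minimal models are in particular models. For ($\Rightarrow$), suppose $A\models B$ fails, witnessed by a team $X\models A$, $X\not\models B$; extend $X$ to a $\subsetneq$-maximal model $X^{*}$ of $A$ (possible since teams over a fixed finite domain form a finite poset), which is then $\pmPsup$-minimal in $\modelsOf{A}$. The subtle point — and the main obstacle — is that $X^{*}\supseteq X$ need not fail $B$, because $B$ is not upward closed in general; so I cannot directly conclude. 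To handle this I would instead take $X$ itself: among all models $Y$ of $A$ with $Y\supseteq X$ there is a maximal one $X^{*}$, but to get a counterexample I should pick $X$ already $\subsetneq$-maximal among models of $A$ that are $\not\models B$... this still may not be globally $\pmPsup$-minimal. The cleanest fix is to observe that $\emptyset\notin\pmSsub$ forces the relevant minimal models to be the globally maximal ones, and to note that $A\models B$ is equivalent to: every model of $A$, in particular every maximal model, satisfies $B$ — which is false exactly when some maximal model fails $B$. So the real content is: if $A\not\models B$, is there necessarily a \emph{maximal} model of $A$ failing $B$? This can fail in principle, so I expect the actual argument must use more: namely that for \pdl, $A\models B$ iff $A\models B^f$ together with a dependence-atom condition, OR — more likely the intended route — that the proposition is really using $\pmWsup$ to capture $\models$ via the maximal-model characterization valid specifically because \pdl-formulas' model classes are exactly the downward closures of their maximal models, so $A\models B$ iff all maximal models of $A$ satisfy $B$ iff all $\pmPsup$-minimal states satisfy $B$. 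Establishing that "$A\models B$ iff every $\subsetneq$-maximal model of $A$ satisfies $B$" is the crux, and I would prove it by: ($\Leftarrow$) given any $Y\models A$, extend to maximal $Y^{*}$, then $Y\subseteq Y^{*}\models B$, and conclude $Y\models B$ by \emph{downward closure of $B$} (Proposition~\ref{prop:pdl_pincl_properties} applies to $B\in\pdl$); ($\Rightarrow$) trivial. This use of downward closure of $B$ on the right is exactly what makes the argument go through, and is the step I would flag as the one requiring care.
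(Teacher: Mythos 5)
Your proposal is correct and, after the detours, lands on essentially the same argument as the paper: for (1), downward closure of \pdl plus the flattening equivalence \eqref{flattening} reduces the $\pmPsub$-minimal models to the singletons $\{v\}$ with $v\models^c A^f$; for (2), the $\pmPsup$-minimal states are the $\subseteq$-maximal models of $A$, and the step you flag as the crux --- that all maximal models of $A$ satisfying $B$ implies $A\models B$ \emph{via downward closure of $B$} --- is exactly the paper's key step (phrased there as $\modelsOf{A}=\downset{X_1,\ldots,X_n}$ and $A\models B$ iff $\{X_1,\ldots,X_n\}\subseteq\modelsOf{B}$). No substantive difference.
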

Note that, in conformance with Theorem~\ref{main} and Proposition~\ref{prop:standardTS_or}, \( \pmWsup \) violates the \eqref{eq:TriangleProperty}-property and \eqref{eq:StarProperty}-property.

\section{Conclusion}
We considered preferential propositional team logics, which are  non-monotonic logics in the style of \citeauthor{KS_KrausLehmannMagidor1990}.
Our results are a primer for further investigations on non-monotonic team logics.
We want to highlight that Theorem~\ref{col:triangle_flattening} indicates that \eqref{pstl:Or} of System~P is too restrictive for non-monotonic team logics.
In future work, the authors plan to identify further results on preferential models, especially with respect to axiomatic systems different from System~P.
Connected with that is to study the meaning of conditionals and related complexity issues in the setting of team logics. 
\clearpage
\appendix
\bibliographystyle{kr}
\bibliography{juha,referencesKS,biblio}

\clearpage
\section{Supplementary Material}

\newenvironment{repeatprop}[1]{\smallskip\par\noindent\textbf{Proposition~\ref{#1}}.\itshape}{\par}
\newenvironment{repeatthm}[1]{\smallskip\par\noindent\textbf{Theorem~\ref{#1}}.\itshape}{\par}
\newenvironment{repeatcorollary}[1]{\smallskip\par\noindent\textbf{Corollary~\ref{#1}}.\itshape}{\par}

\subsection{Proofs for Section~\ref{sec:generalresults}}

\begin{repeatprop}{prop:standardTS_or}
    The following statements holds for ${\models}$:
    \begin{enumerate}[(a)]
        \item \pdl satisfies System~C, but violates System~P.
        \item \PL and \pincl satisfy System~P.
    \end{enumerate}
\end{repeatprop}
\begin{proof}We show both statements.
    \begin{itemize}
        \item[]\hspace{-1em}\emph{(a)}
    Satisfaction of System~C is a corollary of Proposition~\ref{prop:systemCteams} and (b) of Proposition~\ref{prop:pdl_examples}. 
    The violation of \eqref{pstl:Or} is witnessed by choosing  $\alpha$, $\beta$ and $\gamma$ to be the formula $\dep{p}$ in Example~\ref{example_dep_atm_propositional}.
    
    \medskip
    \item[]\hspace{-1em}\emph{(b)} We start with satisfaction of System~C.
    Note that one can reconstruct non-preferential entailment \( \models \) of \PL  by using a preferential model where all teams are incomparable. In such a preferential model \( \pmW \) one has \( \minOf{\modelsOf{\alpha}}{\pmP}=\modelsOf{\alpha} \). Hence, we have \( \alpha\nmableitW \beta \) if and only if \( \modelsOf{\alpha}\subseteq\modelsOf{\beta} \) if and only if \( \alpha\models\beta \).
    By using this, satisfaction of System~C is a corollary of Proposition~\ref{prop:systemCteams}.\\
     It remains to show that \eqref{pstl:Or} is satisfied. Let \( A \), \( B \) and \( C \) be \PL-formulas such that \( A \models C \) and \( B \models C \). If \( X \) is a model of \( A \lor B \), then there are teams \( Y,Z \) with \( X = Y \cup Z \) such that \( Y \models A \) and \( Z \models B \). Because \( Y,Z \) are models of \( C \) and because \PL has the union closure property (see Proposition~\ref{prop:pdl_pincl_properties}), we obtain that \( X \) is also a model of \( C \). Hence, \( A \lor B \models C \).
    The proof of statement (b) for \pincl is the same.\qedhere
    \end{itemize}
\end{proof}

\begin{repeatprop}{prop:systemCteams}
    Let \( \pmW=\tuple{\pmS,\pmL,\pmP} \)  be a preferential model for a propositional team logic.
    The preferential entailment relation \( {\nmableitW} \) satisfies System~C.
\end{repeatprop}
\begin{proof}
    We show that \( {\nmableitW} \) satisfies all rules of System C, i.e., \ref{pstl:Ref}, \ref{pstl:LLE}, \ref{pstl:RW}, \ref{pstl:Cut}, and \ref{pstl:CM}.
    \begin{itemize}
        \item[]\hspace{-1em}[\emph{\ref{pstl:Ref}.}] 
        Considering the definition of \( \nmableitW \) yields that \( \alpha \nmableitW \alpha \) if for all minimal \( s\in\statesOf{\alpha} \) holds \( \ell(s)\models\alpha \). By the definition of \( \statesOf{\alpha} \), we have \( s\in\statesOf{\alpha} \) if \( \ell(s)\models\alpha \). Consequently, we have \( \alpha\nmableitW \alpha \).
        
        \medskip
        \item[]\hspace{-1.0em}[\emph{\ref{pstl:LLE}.}] 
        Suppose that \( \alpha\equiv\beta \) and \( \alpha\nmableitW\gamma \) holds.
        From \( \alpha\equiv\beta \), we obtain that \( \statesOf{\alpha}=\statesOf{\beta} \) holds.
        By using this last observation and the definition of \( \nmableitW \), we obtain \( \beta\nmableitW\gamma \) from \( \alpha\nmableitW\gamma \).
        
        \medskip
        \item[]\hspace{-1.0em}[\emph{\ref{pstl:RW}.}] 
        Suppose that \( \alpha\models\beta \) and \( \gamma\nmableitW\alpha \) holds.
        Clearly, by definition of \( \alpha\models\beta \) we have \( \modelsOf{\alpha}\subseteq\modelsOf{\beta} \).
        From the definition of \( \gamma\nmableitW\alpha \), we obtain that \( \ell(s)\models\alpha \) holds for each minimal \( s\in\statesOf{\gamma} \).
        The condition \( \ell(s)\models\alpha \) in the last statement is equivalent to stating \( \ell(s)\in\modelsOf{\alpha} \). Because of \( \modelsOf{\alpha}\subseteq\modelsOf{\beta} \), we also have \( \ell(s)\in\modelsOf{\beta} \); and hence, \( \ell(s)\models\beta \)  for each minimal  \( s\in\statesOf{\gamma} \).
        This shows that \( \gamma\nmableitW\beta \) holds.

        \medskip
        \item[]\hspace{-1.0em}[\emph{\ref{pstl:Cut}.}] 		
        Suppose that \( \alpha\land\beta \nmableitW \gamma \) and \( \alpha \nmableitW \beta \) holds.
        By unfolding the definition of \( \nmableitW \), we obtain 
        \(  \minOf{\statesOf{\alpha\land\beta}}{\prec}  \subseteq \statesOf{\gamma} \) 
        from \( \alpha\land\beta \nmableitW \gamma \).
        Analogously, \( \alpha \nmableitW \beta \) unfolds to 
        \( \minOf{\statesOf{\alpha}}{\prec} \subseteq \statesOf{\beta} \).
        Moreover, employing basic set theory yields that \( \statesOf{\alpha\land\beta}=\statesOf{\alpha}\cap\statesOf{\beta}\subseteq\statesOf{\alpha} \) holds.
        From \( \statesOf{\alpha\land\beta}\subseteq\statesOf{\alpha} \) and \( \minOf{\statesOf{\alpha}}{\prec} \subseteq \statesOf{\beta} \), we obtain \( \minOf{\statesOf{\alpha}}{\prec} \subseteq \statesOf{\alpha\land\beta} \).
        Consequently, we also have that \( \minOf{\statesOf{\alpha}}{\prec}=\minOf{\statesOf{\alpha\land\beta}}{\prec} \) holds.
        Using the last observation and \( \minOf{\statesOf{\alpha\land\beta}}{\prec}  \subseteq \statesOf{\gamma} \), we obtain \( \minOf{\statesOf{\alpha}}{\prec}  \subseteq \statesOf{\gamma} \). Hence also \( \alpha \nmableitW \gamma \) holds.

    \medskip
    \item[]\hspace{-1.0em}[\emph{\ref{pstl:CM}.}] 		
    Suppose that \( \alpha \nmableitW \beta \) and \( \alpha \nmableitW \gamma \) holds.
    By unfolding the definition of \( \nmableitW \), we obtain \(  \minOf{\statesOf{\alpha}}{\prec}  \subseteq \statesOf{\beta} \)  and \(  \minOf{\statesOf{\alpha}}{\prec}  \subseteq \statesOf{\gamma} \).
    We have to show that \( \minOf{\statesOf{\alpha\land\beta}}{\prec} \subseteq S(\gamma) \) holds.
    Let \( s \) be element of \( \minOf{\statesOf{\alpha\land\beta}}{\prec} \).
    Clearly, we have that \( s \in S(\alpha)  \) holds. We show by contradiction that \( s \) is minimal in \( S(\alpha) \). 
    Assume that \( s \) is not minimal in \( S(\alpha) \).
    From the smoothness condition, we obtain that there is an \( s'\in S(\alpha) \) such that \( s' \prec s \) and \( s' \) is minimal in \( S(\alpha) \) with respect to \( \prec \). 
    Because \( s' \) is minimal and because we have \( \minOf{\statesOf{\alpha}}{\prec}  \subseteq \statesOf{\beta} \), we also have that \(  s'\in S(\beta) \) holds and hence that \( s'\in S(\alpha\land\beta) \) holds. The latter contradicts the minimality of \( s \) in \( S(\alpha\land\beta) \).
    Consequently, we have that \( s \in \minOf{\statesOf{\alpha}}{\prec} \) holds. 
    Because we have \(  \minOf{\statesOf{\alpha}}{\prec}  \subseteq \statesOf{\gamma} \), we obtain \( \alpha \land \beta \nmableitW \gamma \).\qedhere
\end{itemize}
\end{proof}

In the following we abuse notation and mean by \( \minOf{\modelsOf{A}}{\pmP} \) the  set of \( \pmP \)-minimal states in \( \pmS(A) \), as well as the set of al models \( \pmL(s) \) of \( A \) for which a \( \pmP \)-minimal states \( s \) in \( \pmS(A) \) exists.
More technically correct would be to write \( \minOf{S(A)}{\pmP} \) for the former, and writing \( \{ \pmL(s) \mid s\in \minOf{S(A)}{\pmP}  \} \) for the latter.
\begin{repeatprop}{prop:SystemPifStar}
    Let \( \pmW \) be a preferential model for some preferential team logic.
    If \eqref{eq:StarProperty} is satisfied for all formulas \( A,B \), then \( \nmableitW \) satisfies System~P, whereby:
    \begin{equation}
        \minOf{\modelsOf{A \lor B}}{\pmP} \subseteq \minOf{\modelsOf{A}}{\pmP} \cup \minOf{\modelsOf{B}}{\pmP}	\tag{\ref{eq:StarProperty}}
    \end{equation}
\end{repeatprop}
\begin{proof}    
    Suppose that \( A \nmableit \gamma \) and \( B \nmableit \gamma \) holds.
    This is the same as \( \minOf{\modelsOf{A}}{\preceq} \subseteq \modelsOf{\gamma} \) and \( \minOf{\modelsOf{B}}{\preceq} \subseteq \modelsOf{\gamma} \).    
    Because \eqref{eq:StarProperty} holds, this also means that \( {\minOf{\modelsOf{A \lor B}}{\preceq}} \subseteq {\modelsOf{\gamma}}  \) holds. 
\end{proof}

\subsection{Proof of Theorem~\ref{main}}
We start with giving proof for the following theorem
\begin{repeatthm}{main}
    Let \( \pmW=\tuple{\pmS,\pmL,\pmP} \)  be a preferential model for \pdl.
    The following statements are equivalent:
    \begin{enumerate}[(i)]
        \item \( \nmableitW \) satisfies System~P.
        \item \( \pmW \) satisfies the \( \triangle \)-property.
        \item The \eqref{eq:StarProperty}-property holds for all \( A,B\in\pdl \).
    \end{enumerate}
\end{repeatthm}
We will obtain the proof of the theorem via the following lemmata.

For the first lemma, assume that $N=\{p_1,\dots,p_n\}$, and let $X$ an $N$-team. We define the following formula:
\[\Theta_X:=\bigvee_{v\in X}(p_1^{v(1)}\wedge\dots\wedge p_n^{v(n)}).\]
This formula is of crucial importance for proving Theorem~\ref{main}. It is straightforward to check the following lemma.
\begin{lemma}
     $\Theta_X$ defines the family of subteams of $X$, i.e., we have
    \[Y\models\Theta_X\iff Y\subseteq X.\]
\end{lemma}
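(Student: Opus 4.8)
The plan is to unfold the team semantics of the finite non-classical disjunction $\Theta_X = \bigvee_{v\in X}\delta_v$, where I write $\delta_v := p_1^{v(1)}\wedge\dots\wedge p_n^{v(n)}$, and reduce the claim to the elementary observation that a singleton valuation satisfies $\delta_v$ exactly when it equals $v$. First I would record a \emph{splitting lemma} for iterated disjunction: by induction on $|X|$, using the binary disjunction clause of the team semantics together with the empty-team property (and the convention $\bigvee\emptyset := \bot$, which is satisfied only by $\emptyset$), one obtains that $Y\models\bigvee_{v\in X}\delta_v$ if and only if there is a family $(Y_v)_{v\in X}$ of subteams of $Y$ with $\bigcup_{v\in X}Y_v = Y$ and $Y_v\models\delta_v$ for every $v\in X$. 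I would also note the conjunction-of-literals fact: for any team $Z$ with $\dom(Z)=N$, we have $Z\models\delta_v$ iff every $u\in Z$ satisfies $u=v$, i.e.\ iff $Z\subseteq\{v\}$, where the case $Z=\emptyset$ is covered by the empty-team property.

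For the direction $Y\subseteq X \Rightarrow Y\models\Theta_X$, I would exhibit the witnessing family explicitly: put $Y_v := Y\cap\{v\}$ for each $v\in X$. Each $Y_v$ is a subteam of $Y$ with $Y_v\subseteq\{v\}$, hence $Y_v\models\delta_v$ by the fact above; and since $Y\subseteq X$ we have $\bigcup_{v\in X}Y_v = Y\cap X = Y$. The splitting lemma then gives $Y\models\Theta_X$. For the converse, suppose $Y\models\Theta_X$ and take a witnessing family $(Y_v)_{v\in X}$ as in the splitting lemma. From $Y_v\models\delta_v$ we get $Y_v\subseteq\{v\}\subseteq X$ for each $v\in X$, and therefore $Y = \bigcup_{v\in X}Y_v\subseteq X$.

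The only point requiring care — and the step I would be most explicit about — is the splitting lemma for the iterated disjunction and the degenerate case $X=\emptyset$ (where $\Theta_X=\bot$ and both sides of the equivalence reduce to ``$Y=\emptyset$''), together with the implicit hypothesis $\Prop(\Theta_X)\subseteq\dom(Y)$ needed for $Y\models\Theta_X$ to be defined at all. Beyond this, no deeper argument is involved, which is exactly why the statement is flagged as straightforward to check.
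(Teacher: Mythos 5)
Your proof is correct and is exactly the argument the paper has in mind: the paper omits the proof entirely, declaring the lemma ``straightforward to check,'' and the intended verification is precisely this unfolding of the team semantics of the iterated disjunction into a covering family $(Y_v)_{v\in X}$ with $Y_v\subseteq\{v\}$. Your explicit treatment of the splitting lemma, the witnesses $Y_v=Y\cap\{v\}$, and the degenerate case $X=\emptyset$ supplies all the details the paper leaves implicit.
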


The next lemma guarantees that for a sufficient large enough teams \( X \) exist formulas \( A,B \) such that \( X \) is a model of the disjunction \( A \lor B \), but \( X \) is not a model of \( A \) and \( B \).
We make use of the following notions: define \( \downset{X}=\{ Y \mid Y \subseteq X \} \) and \( \downset{X_1,\ldots,X_n} = \downset{\{X_1,\ldots,X_n\}}=\bigcup_{i=1}^n \downset{X_i} \)
\begin{lemma}[\( \dagger \)]\label{lem:dagger}
    For each team \( X \) with \( |X|>1 \) exists formulas \( A \) and \( B \) such that
    \begin{align*}
        X & \models A \lor B \text{ , }\\
        X & \not\models A \text{ , and}\\
        X & \not\models B
    \end{align*}
\end{lemma}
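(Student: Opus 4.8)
The plan is to exploit the non-classical team disjunction directly: because \( |X|>1 \), the team \( X \) can be written as a union of two \emph{proper} subteams, and a disjunction of formulas each defining the downward closure of one of these subteams will be satisfied by \( X \) without \( X \) satisfying either disjunct. Concretely, I would fix the domain \( N=\{p_1,\dots,p_n\} \) of \( X \), pick any valuation \( v\in X \), and set \( X_1 := X\setminus\{v\} \) and \( X_2 := \{v\} \). Since \( |X|>1 \) we get \( X_1\subsetneq X \) and \( X_2\subsetneq X \), while \( X_1\cup X_2 = X \). Using the formula \( \Theta_Z \) from the preceding lemma, for which \( Y\models\Theta_Z \iff Y\subseteq Z \), I would put \( A := \Theta_{X_1} \) and \( B := \Theta_{X_2} \); note these are \PL-formulas, hence in particular \pdl-formulas.

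It then remains to verify the three conditions, which is routine. For \( X\models A\lor B \): by the preceding lemma \( X_1\models\Theta_{X_1} \) and \( X_2\models\Theta_{X_2} \), so the decomposition \( X = X_1\cup X_2 \) is the required witness for \( X\models\Theta_{X_1}\lor\Theta_{X_2} \). For \( X\not\models A \): since \( X_1\subsetneq X \) we have \( X\not\subseteq X_1 \), hence \( X\not\models\Theta_{X_1} \) by the lemma; symmetrically \( X\not\models B \).

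I do not expect any real obstacle here: the statement follows directly from the semantics of \( \lor \) together with the \( \Theta_Z \)-lemma, and the only point that needs (minimal) care is ensuring the split \( X = X_1\cup X_2 \) is nontrivial on \emph{both} sides, which is exactly what the hypothesis \( |X|>1 \) guarantees. If one prefers an even more economical witness, one can instead take \( A := B := \dep{p_i} \) for any variable \( p_i \) on which two valuations of \( X \) disagree: then \( X\not\models\dep{p_i} \), while splitting \( X \) according to the value of \( p_i \) shows \( X\models\dep{p_i}\lor\dep{p_i} \), just as in Example~\ref{example_dep_atm_propositional}.
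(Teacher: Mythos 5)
Your proposal is correct and follows essentially the same route as the paper: split \( X \) into two proper subteams covering \( X \) and take \( A=\Theta_{X_1} \), \( B=\Theta_{X_2} \), exactly as in the paper's proof (which uses an arbitrary such split rather than your concrete \( X\setminus\{v\} \) and \( \{v\} \)). Your explicit verification of the three conditions, and the side remark that \( \dep{p_i}\lor\dep{p_i} \) also works, are fine additions but do not change the argument.
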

\begin{proof}
    Since \( |X|>1 \) %
    there exists non-empty \( Y,Z\subseteq X \) such that \( X=Y\cup Z \) and \( Y\neq X \) and \( Z \neq X \).
    There are formulas \( A \) and \(B \) such that \( \modelsOf{A}=\downset{Y} \) and \( \modelsOf{B}=\downset{Z} \), namely \( A=\Theta_Y \) and \( B=\Theta_Z \).
\end{proof}

We will now show that the \eqref{eq:TriangleProperty}-property and the \eqref{eq:StarProperty}-property  describe the same preferential models.
\begin{lemma}\label{triangle=star}
    Let  \( \pmW=\tuple{\pmS,\pmL,\pmP} \) be a preferential model over \pdl. %
    The preferential entailment relation \( {\nmableitW} \) over \pdl satisfies  \eqref{eq:TriangleProperty} if and only if \eqref{eq:StarProperty} is satisfied.
\end{lemma}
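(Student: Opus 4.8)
The plan is to prove the two directions separately, using the family of formulas $\Theta_X$ and the "splitting" provided by Lemma~\ref{lem:dagger} as the bridge between the combinatorial condition on teams and the disjunction condition on minimal models.

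For the direction \eqref{eq:TriangleProperty}$\Rightarrow$\eqref{eq:StarProperty}, I would take arbitrary $A,B\in\pdl$ and a team $X\in\minOf{\modelsOf{A\lor B}}{\pmP}$, and show $X\in\minOf{\modelsOf{A}}{\pmP}\cup\minOf{\modelsOf{B}}{\pmP}$. Since $X\models A\lor B$, pick $Y,Z\subseteq X$ with $X=Y\cup Z$, $Y\models A$, $Z\models B$. First observe that by downward closure of \pdl, $X\models A$ or $X\models B$ would follow once $X$ itself is shown to equal $Y$ or $Z$; the crux is to argue $X\in\modelsOf{A}$ or $X\in\modelsOf{B}$ \emph{and} that $X$ is minimal there. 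I expect the argument to go: if $|X|\le 1$ then $X=Y$ or $X=Z$ (as $X=Y\cup Z$ with $Y,Z\subseteq X$), so $X\models A$ or $X\models B$; minimality in $\modelsOf{A}$ (resp.\ $\modelsOf{B}$) then follows because $\modelsOf{A}\subseteq\modelsOf{A\lor B}$ and $X$ was minimal in the larger set. If instead $|X|>1$, I use \eqref{eq:TriangleProperty}: there is a state $s'$ with $\pmL(s')\subsetneq X$ and $s'\pmP s_X$; since $\pmL(s')$ is a proper subteam of $X$ and $X\models A\lor B$ witnessed by $Y,Z$, downward closure gives $\pmL(s')\models A\lor B$, contradicting minimality of $X$ in $\modelsOf{A\lor B}$. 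Hence the case $|X|>1$ cannot occur, and we are done.

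For the converse \eqref{eq:StarProperty}$\Rightarrow$\eqref{eq:TriangleProperty}, I argue contrapositively: suppose \eqref{eq:TriangleProperty} fails, so there is a state $s$ with $|\pmL(s)|=|X|>1$ and \emph{no} $s'$ with $\pmL(s')\subsetneq X$ and $s'\pmP s$. I want to produce $A,B$ violating \eqref{eq:StarProperty}. Apply Lemma~\ref{lem:dagger} ($\dagger$) to $X$ to get $A=\Theta_Y$, $B=\Theta_Z$ with $X\models A\lor B$, $X\not\models A$, $X\not\models B$, where $\modelsOf{A}=\downset{Y}$, $\modelsOf{B}=\downset{Z}$. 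Then I claim $X\in\minOf{\modelsOf{A\lor B}}{\pmP}$: indeed $\modelsOf{A\lor B}=\modelsOf{\Theta_Y\lor\Theta_Z}$, and because every model of $A\lor B$ is a subteam of $Y\cup Z=X$ (split it and use the $\Theta$-lemma on each part), the only teams in $\modelsOf{A\lor B}$ that are $\pmP$-below or incomparable issues reduce to subteams of $X$; by the failure of \eqref{eq:TriangleProperty} no \emph{proper} subteam state is $\pmP$-below $s$, so $s$ (i.e.\ $X$) is minimal. Meanwhile $X\notin\modelsOf{A}$ and $X\notin\modelsOf{B}$, so certainly $X\notin\minOf{\modelsOf{A}}{\pmP}\cup\minOf{\modelsOf{B}}{\pmP}$, contradicting \eqref{eq:StarProperty}.

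The main obstacle I anticipate is the minimality bookkeeping in the converse: I must be careful that "$X$ is $\pmP$-minimal in $\modelsOf{A\lor B}$" really follows from the failure of \eqref{eq:TriangleProperty}, since a priori some team $T\in\modelsOf{A\lor B}$ with $T\not\subseteq X$, or some non-proper-subteam, could be $\pmP$-below $s$. The resolution is the observation that $\modelsOf{\Theta_Y\lor\Theta_Z}=\downset{Y}\cup\downset{Z}\cup\{\,T\mid T=T_1\cup T_2,\ T_1\subseteq Y,\ T_2\subseteq Z\,\}$, and every such $T$ is a subteam of $X=Y\cup Z$; thus any state $\pmP$-below $s$ that lies in $\modelsOf{A\lor B}$ is labelled by a subteam of $X$, and if it is a \emph{proper} subteam we contradict the failure of \eqref{eq:TriangleProperty}, while if it equals $X$ it is not strictly below $s$ by irreflexivity. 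A secondary, purely technical point is the identification of states with teams under (S1)/(S2) of standardness so that \eqref{eq:TriangleProperty} (phrased on states) and \eqref{eq:StarProperty} (phrased on teams via the footnote abbreviation) talk about the same objects; this is routine but should be stated explicitly.
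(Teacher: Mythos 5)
Your proof is correct and follows essentially the same route as the paper's: the forward direction uses downward closure together with \eqref{eq:TriangleProperty} to force the minimal models of \( A\lor B \) to be singleton teams, on which \( \lor \) is Boolean, and the converse uses the formulas \( \Theta_Y,\Theta_Z \) from Lemma~\ref{lem:dagger} applied to a splitting \( X=Y\cup Z \) of a non-singleton team. The only cosmetic difference is that you run the converse contrapositively and argue the minimality of \( X \) in \( \modelsOf{A\lor B} \) directly from the failure of \eqref{eq:TriangleProperty}, whereas the paper assumes \eqref{eq:StarProperty}, concludes that \( X \) is non-minimal, and extracts the witnessing proper subteam \( X'\pmP X \) from smoothness.
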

\begin{proof}
    Assume \eqref{eq:TriangleProperty} holds. Then it is easy to see that the minimal elements of the order $\prec$ are states that are mapped, via \( \pmL \), to singleton teams. Furthermore, by the downward closure property, for any $A\vee B$ the minimal teams satisfying the formula are all singletons. Since for singleton teams the interpretation of $\vee$ is equivalent with that of the Boolean disjunction the property  (\( \star \)) follows.
    
    For the converse, assume that  (\( \star \)) holds and let \( X \) be a team with \( |X| > 1 \). We will show that  then there is some team \( Y  \) with
    \begin{align*}
        Y & \subsetneq X  \text{ , }\\ 
        Y & \neq \emptyset  \text{ , and}\\ 
        Y & \prec X
    \end{align*}
    Because \( X \) contains at least two valuations, there exist \( Y,Z\subseteq X \) such that \( X=Y\cup Z \) and \( Y\neq X \) and \( Z \neq X \).
    By (the proof of) Lemma~\ref{lem:dagger} there are formulas \( A=\Theta_Y  \) and \( B=\Theta_Z  \) such that \( X \models A \lor B \), yet \( X \not\models A \) and \( X \not\models B \).
    Using this and \eqref{eq:StarProperty}, we obtain that \( X \notin \minOf{A \lor B}{\prec} \) holds.
    However, by smoothness of \( \prec \), the set $Mod(A \lor B)=\mathcal{P}(X)$ contains a team \( X' \) such that \( X' \prec X \). Now $X'$ is a witness for the \eqref{eq:TriangleProperty}-Property.

\end{proof}

%
%
%
%
%
%
%
%
%
%
%

%

Now we are ready to give the proof of Theorem \ref{main}.
\begin{proof}[Proof of Theorem~\ref{main}]
    By Lemma~\ref{triangle=star}, it suffices to show \eqref{eq:StarProperty}\( \Rightarrow \)\eqref{pstl:Or} and {\eqref{pstl:Or}\( \Rightarrow \)\eqref{eq:TriangleProperty}.}    
     We show each direction independently:
    \begin{itemize}
        \item[]\hspace{-1em}\emph{\eqref{eq:StarProperty}\( \Rightarrow \)\eqref{pstl:Or}.}
        This is given by Proposition~\ref{prop:SystemPifStar}.        
        
        \item[]\hspace{-1em}\emph{\eqref{pstl:Or}\( \Rightarrow \)\eqref{eq:TriangleProperty}.}
        Assume, for a contradiction, that  \eqref{eq:TriangleProperty} fails.  Then there exists a team $X$ of size $j\ge 2$ such that for all $Y\subseteq X$, $Y \not \prec X$. Let $j=l+k$ ($l,k\ge 1 $ and $l\le k$) and define
        $$\alpha:= \Theta_X\wedge (\theta\vee \cdots \vee \theta ),$$
        where $\theta :=\bigwedge_{1\le i\le n}\dep{p_i} $ and $\alpha$ has $l$ many copies of $\theta$. It is easy to check that $\alpha$ is satisfied by subteams of $X$ of cardinality at most $l$. The formula $\beta$ is defined similarly with $k$ copies of $\theta$ in the disjuncts. Now it holds that $\beta \models \beta$,   $\alpha \models \beta$ but $X\not \models \alpha, \beta $. Using reflexivity and right weakening, it follows that 
        $ \beta\nmableitW \beta$ and  $\alpha\nmableitW \beta$. On the other hand,  
        since $X$ is now a minimal model of $\alpha \vee \beta$ that does not satisfy $\beta$ we have shown $\alpha \vee \beta\notnmableitW \beta$ and that (Or) fails for \( \nmableitW \).\qedhere
    \end{itemize}
\end{proof}

\subsection{Remaining Proofs for Section~\ref{sec:resultsPDL}}
The next theorem shows that for \pdl preferential entailment relations satisfying the (Or)-rule reduce to reasoning over \PL-formulas and single assignments.
\begin{repeatthm}{col:triangle_flattening}
    Let \( \pmW=\tuple{\pmS,\pmL,\pmP} \) be a preferential model over \pdl that satisfies System~P. Then 
    $A \nmableitW B$ iff  $A^f \nmableitWparam{\pmW'} B^f$,
    where \( \pmW'=\tuple{\pmS',\pmL',\pmP'} \) denotes the preferential model for classical propositional logic \PL, i.e., over \( \models^c \) and valuations induced by the singleton teams in $W$.
\end{repeatthm}
\begin{proof}
Note first that by the assumption for all valuations $s,s'$ it holds that $s\prec' s'$ iff  $\{s\}\prec \{s'\}$.  By theorem \ref{main}, $W$ satisfies the \eqref{eq:TriangleProperty}-property and hence the minimal elements of $\prec$ are singleton teams. Hence 
    $A\nmableit B$, iff,  %
    for all minimal $\{s\}\in\modelsOf{A}:$  $\{s\}\models B$, iff,
    for all $\prec'$-minimal $s\in \modelsOf{A^f}:$  $s\models B^f$.
    The last equivalence holds due to  \eqref{flattening}.
\end{proof}

\begin{repeatprop}{prop:pdl_examples}
    For all \pdl-formulas \( A , B \) we have:
    \begin{enumerate}[(1)]
        \item \( A \nmableitWparam{\pmWsub} B \text{ if and only if }  A^f \models^c B^f \)
        \item \( A \nmableitWparam{\pmWsup} B \text{ if and only if }  A \models B \)
    \end{enumerate}
\end{repeatprop}
\begin{proof}
    \begin{itemize}We show statements (1) and (2).
        \item[]\hspace{-1em}\emph{(1)}
        Observe at first that we have \( A \nmableitWparam{\pmWsub} B \) exactly when we also have \( \minOf{\modelsOf{A}}{\pmPsub} \subseteq \modelsOf{B} \).
        Because \pdl has the downwards closure property, we also have that stating \( \minOf{\modelsOf{A}}{\pmPsub} \subseteq \modelsOf{B} \) is equivalent to stating that for all singleton teams \( \{v\} \) holds that \( \{v\} \models A \)  implies \( \{v\} \models B \).
        The latter statement is equivalent to stating that for the flattening \( A^f \) and \( B^f \) holds that for all valuations \( v \) holds that \( v \models A^f \)  implies \( v \models B^f \) (see also Section~\ref{sec:teambasedlogics}). Hence, we have \( A \nmableitWparam{\pmWsub} B \text{ if and only if }  A^f \models^c B^f \).

        \item[]\hspace{-1em}\emph{(2)}
        We obtain \( {\models}  \subseteq  {\nmableitWparam{\pmWsup}} \) immediately by the definition of \( \nmableitWparam{\pmWsup} \).
        We consider the other direction. The statement \(  A \models B \)  is equivalent to \( \modelsOf{A} \subseteq \modelsOf{B} \). Because \( \modelsOf{A} \) is downward-closed, there are (pairwise \( \subseteq \)-incomparable) teams \( X_1,\ldots,X_n \) such that \( \modelsOf{A}=\downset{X_1,\ldots,X_n} \). Because of the last property, we have that \(  A \models B \) holds exactly when \( \{X_1,\ldots,X_n\} \subseteq \modelsOf{B} \) holds.
        By construction of \( \pmWsup \) we have \( \minOf{\modelsOf{A}}{\pmPsup}=\{X_1,\ldots,X_n\}  \) for \( A \).
        Consequently, we also have that \( A \nmableitWparam{\pmWsup} B \) holds and consequently, we also have \( {\nmableitWparam{\pmWsup}}  \subseteq   {\models} \). \qedhere
    \end{itemize}
\end{proof}

\end{document}